\documentclass{article}

\PassOptionsToPackage{numbers, sort&compress}{natbib}
\usepackage[preprint]{neurips_2026}

\usepackage[utf8]{inputenc} 
\usepackage[T1]{fontenc}    
\usepackage{hyperref}       
\usepackage{url}            
\usepackage{booktabs}       
\usepackage{amsfonts}       
\usepackage{nicefrac}       
\usepackage{microtype}      
\usepackage{xcolor}         
\usepackage{enumitem}
\usepackage{amsmath}
\usepackage{amssymb}
\usepackage{amsthm}
\usepackage{wrapfig}
\usepackage{graphicx}
\usepackage{algorithm}
\usepackage{algorithmic}
\usepackage{subcaption}
\usepackage{multirow}
\usepackage{wrapfig}

\newtheorem{lemma}{Lemma}
\title{GCCM: Enhancing Generative Graph Prediction via Contrastive Consistency Model}

\author{%
}

\author{%
  Shaozhen Ma \\
  University of New South Wales \\
  \texttt{shaozhen.ma@unsw.edu.au} \\
  \And
  Wei Huang \\
  University of New South Wales \\
  \texttt{w.c.huang@unsw.edu.au} \\
  \And
  Hanchen Wang\\
  University of Technology Sydney \\
  \texttt{Hanchen.Wang@uts.edu.au} \\
  \And
  Dong Wen\\
  University of New South Wales\\
  \texttt{dong.wen@unsw.edu.au} \\
  \And
  Wenjie Zhang\\
  University of New South Wales\\
  \texttt{wenjie.zhang@unsw.edu.au} \\
}

\begin{document}

\maketitle

\begin{abstract}
Conditional generative models, particularly diffusion-based methods, have recently been applied to graph prediction by modeling the target as a conditional distribution given the input graph, yielding competitive results compared to deterministic predictor.
However, existing diffusion-based prediction methods typically require expensive iterative denoising at inference and often suffer from unstable sampling, which motivates recent efforts to reduce inference denoising steps and enable stable sampling via techniques such as consistency training. 
Despite this progress, we find that existing consistency training methods for graph prediction could potentially fall into a shortcut solution: the model may attempt to satisfy the self-consistency constraint by ignoring the noisy target (i.e., assigning it negligible weight), ultimately collapsing into a purely deterministic predictor. 
To mitigate such shortcut solution, we propose GCCM, a graph contrastive consistency model that goes beyond isolated pairwise matching between the same target at different noise levels by introducing negative pairs into a contrastive consistency objective.
This adds an additional separation requirement, making the shortcut solution no longer trivially sufficient to satisfy the proposed objective.
Moreover, we apply feature perturbation to the input node/edge features to break identical conditioning on the input graph, so that the shortcut no longer yields the same predictions across noise levels and becomes less attractive.
Extensive experiments on benchmark datasets demonstrate that GCCM mitigates the shortcut solution and yields consistent performance improvements in graph prediction compared to deterministic predictors.
\end{abstract}

\section{Introduction}
Prediction tasks on graphs (e.g., node/graph-level predictions) are fundamental problems in graph machine learning, aiming to predict target variables defined over graph-structured data.
Earlier methods for graph prediction~\cite{kreuzer2021rethinkinggraphtransformersspectral,ying2021transformersreallyperformbad,rampášek2023recipegeneralpowerfulscalable} mainly focus on a deterministic paradigm, which directly maps an input node or graph to the target.
In recent years, generative models, particularly the diffusion-based models, have demonstrated promising performance across a wide range of generation tasks on graphs~\cite{sun2023difuscographbaseddiffusionsolvers,vignac2023digressdiscretedenoisingdiffusion,huang2025diffgedcomputinggraphedit,huang2025unsupervisedtrainingmatchingbasedgraph}.
Benefiting from their strong capability in modeling complex data distributions, increasing attention has been paid to extending diffusion-based paradigms to graph prediction tasks.
In contrast to deterministic models, diffusion-based approaches reformulate prediction as a conditional generation problem, modeling the full conditional distribution of target variables rather than producing a single point estimate.
Despite their potential, applying diffusion-based models to graph prediction tasks remains challenging in two key aspects: (1) \textbf{Step-Level Inefficiency:} diffusion-based methods perform inference through iterative denoising, frequently involving dozens or even hundreds of reverse steps, which introduces substantial computational overhead, limiting their inference efficiency; (2) \textbf{Sample-Level Inefficiency:} predictions obtained from a single noise sample can exhibit high variance, leading to unstable inference. To obtain reliable predictions, generative methods typically require aggregating predictions from multiple noise samples, which further increases inference cost.

To address the above challenges, a common strategy is to adopt the consistency training~\cite{song2023consistencymodels,song2024improved} to enhance diffusion-based models. 
Specifically, PCL~\cite{li2025generative} proposes a predictive consistency framework to train the diffusion-based graph prediction model by enforcing consistency mapping across different noise levels, which enables stable sampling and one-step inference from a single noise sample.
Despite its efficiency, we observe that applying consistency training to graph prediction admits a shortcut solution.
When predictions at different noise levels are conditioned on the same underlying input graph and enforced to be matched, the model can trivially satisfy the consistency objective by ignoring the noisy target and relying solely on the conditioning input.
This potential shortcut collapses the conditional denoising network into a purely deterministic predictor, thus undermining the goal of conditional generation.

\textbf{Our Contributions.}\quad Motivated by the above observation, we propose Graph Contrastive Consistency Models (GCCM) to mitigate such shortcut solution in consistency-based prediction.
In particular, \textbf{(1)} instead of treating self-consistency as a purely matching constraint between two denoising predictions of the same underlying instance (i.e., a node of node-level prediction and a graph for graph-level prediction), we incorporate negative pairs across different instances in the mini-batch.
This yields a contrastive consistency objective that aligns the target representations of the positive pair while separating them from negatives in the latent space, making it harder to satisfy the contrastive consistency objective by simply ignoring the noisy target.
Nevertheless, the proposed contrastive consistency objective alone does not fully eliminate the shortcut solution.
When the conditioning node/edge features remains identical across the two noisy views, the model may still attempt to satisfy the positive-pair agreement by relying solely on the identical conditioning input and ignoring the noisy target (i.e., the shortcut solution).
\textbf{(2)} To further enhance the proposed contrastive consistency objective, we apply feature perturbation to the input node/edge features to break identical conditioning, making the shortcut solution less attractive.
Together, these two components encourage our proposed GCCM to fully leverage the generative formulation to enhance the performance of predictors.
Experimental results show that GCCM outperforms the existing diffusion-based graph prediction baselines on wide-range benchmark datasets and prediction tasks. Detailed ablation studies further verify the contribution of each proposed component in mitigating the shortcut solution.

\vspace{-0.3cm}

\section{Related Works}

\noindent\textbf{Generative diffusion models for graph prediction.}\quad
In recent years, diffusion models~\cite{ho2020denoisingdiffusionprobabilisticmodels,song2022denoisingdiffusionimplicitmodels,nichol2021improveddenoisingdiffusionprobabilistic,rombach2022highresolutionimagesynthesislatent} have emerged as a powerful framework for a wide range of generation tasks. 
Motivated by their strong generative capacity, a growing body of work has extended diffusion models to graph-related generation tasks~\cite{vignac2023digressdiscretedenoisingdiffusion,zhou2024unifyinggenerationpredictiongraphs}. 
Beyond graph generation tasks, LGD~\cite{zhou2024unifyinggenerationpredictiongraphs} explores diffusion models for graph prediction by applying diffusion in a latent space to generate latent representations and then mapping them to the target space using a learned decoder.
Although effective, it still incurs substantial computational overhead due to iterative denoising at inference time.
Building upon diffusion models, consistency models~\cite{song2023consistencymodels,song2024improved} eliminate iterative reverse diffusion by directly learning a mapping from noisy samples to clean data through the self-consistency objective, enabling efficient one-step or few-step generation.
Inspired by the consistency mapping principle, PCL~\cite{li2025generative} further develops a predictive consistency framework for prediction tasks by enforcing consistent predictions across different noise levels.
This formulation enables stable sampling and one-step inference in diffusion-based graph prediction models.
Due to the space limitations, an extended related work regarding the classical deterministic graph prediction models can be found in Appendix~\ref{appendix:related_works}.
\vspace{-0.3cm}

\section{Preliminaries}
\label{sec:preliminaries}
In this section, we use $\mathbf{Y}$ to denote the target variable to be generated, and $\mathbf{C}$ to denote the conditioning input for conditional generation. 

\subsection{Diffusion Models}
\label{sub:Diffusion_Models}
Diffusion models~\cite{sohldickstein2015deepunsupervisedlearningusing,austin2023structureddenoisingdiffusionmodels,hoogeboom2021argmaxflowsmultinomialdiffusion,ho2020denoisingdiffusionprobabilisticmodels,nichol2021improveddenoisingdiffusionprobabilistic} define a generative process by gradually transforming the clean target $\mathbf{Y}_0$ into noise through a forward process, and learning to recover the clean target progressively through a reverse process.

\subsubsection{Forward Process}
\label{sub:diffusion_forward_process}
The forward process gradually corrupts the clean target $\mathbf{Y}_0$ into a sequence of noisy latent variables
$\mathbf{Y}_{1:T}=\{\mathbf{Y}_1,\mathbf{Y}_2,\ldots,\mathbf{Y}_T\}$, defined as
$q(\mathbf{Y}_{1:T}\mid \mathbf{Y}_0)=\prod_{t=1}^{T} q(\mathbf{Y}_t\mid \mathbf{Y}_{t-1})$.

\noindent\textbf{Discrete targets.}\quad
For discrete targets, the clean target is represented in one-hot form as
$\mathbf{Y}_0=\mathbf{Y}\in\{0,1\}^{n\times K}$,
where $n$ is the number of target variables and $K$ is the number of categories.
The forward process is defined as
$q(\mathbf{Y}_t\mid \mathbf{Y}_{t-1})
=
\mathrm{Cat}\left(\mathbf{Y}_t;\; \mathbf{p}=\mathbf{Y}_{t-1}\mathbf{Q}_t\right)$,
where $\mathrm{Cat}(\mathbf{Y}_t;\mathbf{p})$ denotes a categorical distribution over $K$ one-hot vectors with probabilities given by $\mathbf{p}$, and
$
\mathbf{Q}_t
=
(1-\beta_t)\mathbf{I}
+
\frac{\beta_t}{K}\mathbf{1}\mathbf{1}^{\top}
\in \mathbb{R}^{K\times K},
$
is the transition matrix that determines the corruption introduced at time step $t$, where $\beta_t\in(0,1)$ is the corruption rate.


\noindent\textbf{Continuous targets.}\quad Let $\mathbf{Y}_0=\mathbf{Y}\in\mathbb{R}^{n\times 1}$ denote the clean target.
The forward process $q(\mathbf{Y}_t\mid \mathbf{Y}_{t-1})$ for the continuous targets perturbs $\mathbf{Y}_0$ by progressively injecting standard Gaussian noise
$\boldsymbol{\epsilon}\sim\mathcal{N}(\mathbf{0},\mathbf{I})$ over time steps, such that $
q(\mathbf{Y}_t\mid \mathbf{Y}_{t-1})
=
\mathcal{N}\left(
\mathbf{Y}_t;\;
\sqrt{1-\beta_t}\,\mathbf{Y}_{t-1},\;
\beta_t\mathbf{I}
\right)
$,
where $\beta_t\in(0,1)$ controls the noise level at time step $t$.

\subsubsection{Reverse Process}
The reverse process reconstructs the clean target by defining a sequence of learned conditional transitions
$p_{\theta}(\mathbf{Y}_{0:T}\mid \mathbf{C})
= p(\mathbf{Y}_T)\,\prod_{t=1}^{T} p_{\theta}\left(\mathbf{Y}_{t-1}\mid \mathbf{Y}_t,\mathbf{C}\right)$ that progressively transforms $\mathbf{Y}_T$ toward $\mathbf{Y}_0$.
Each transition $p_{\theta}\left(\mathbf{Y}_{t-1}\mid \mathbf{Y}_t,\mathbf{C}\right)$ is parameterized by the denoising network $f_{\theta}$.
During training, diffusion models sample a time step $t$ and obtain the corresponding noisy target $\mathbf{Y}_t$ via the forward process.
During inference, they start from a random noisy target $\mathbf{Y}_T$ and iteratively apply the learned reverse transitions to denoise $\mathbf{Y}_T$ towards the target distribution.

\noindent\textbf{Discrete targets.}\quad For discrete targets, a common parameterization is to train a denoising network $f_{\theta}(\mathbf{Y}_t,t,\mathbf{C})$ to predict the clean-target distribution, and then obtain the reverse distribution by marginalizing over the predicted clean target $\tilde{\mathbf{Y}}_0$, such that $p_{\theta}(\mathbf{Y}_{t-1}\mid \mathbf{Y}_t,\mathbf{C})
=
\sum_{\tilde{\mathbf{Y}}}
q(\mathbf{Y}_{t-1}\mid \mathbf{Y}_t,\tilde{\mathbf{Y}}_0)\;
f_{\theta}(\mathbf{Y}_t,t,\mathbf{C})$, where the posterior $q(\mathbf{Y}_{t-1}\mid \mathbf{Y}_t,\mathbf{Y}_0)$ can be obtained by $q(\mathbf{Y}_{t-1}\mid \mathbf{Y}_t,\mathbf{Y}_0)
=
\frac{
q(\mathbf{Y}_t\mid \mathbf{Y}_{t-1},\mathbf{Y}_0)\;
q(\mathbf{Y}_{t-1}\mid \mathbf{Y}_0)
}{
q(\mathbf{Y}_t\mid \mathbf{Y}_0)
}$.
The training objective minimizes the expected discrepancy between the predicted target and the ground-truth clean target: $\mathcal{L}_{\mathrm{disc}}(\theta)
=
\mathbb{E}
\Big[
d\big(
f_{\theta}(\mathbf{Y}_t,t,\mathbf{C}),\,
\mathbf{Y}_0
\big)
\Big]$, where $d(\cdot,\cdot)$ is typically the cross-entropy loss for discrete targets.

\noindent\textbf{Continuous targets.}\quad 
For continuous targets, the reverse Gaussian transition is defined as $p_\theta(\mathbf{Y}_{t-1}\mid \mathbf{Y}_t,\mathbf{C})
=
\mathcal{N}\Big(
\mathbf{Y}_{t-1};
\mu_\theta(\mathbf{Y}_t,t,\mathbf{C}),
\beta_t\mathbf{I}
\Big)$,
where $\mu_\theta(\mathbf{Y}_t,t,\mathbf{C})$ and $\beta_t\mathbf{I}$ denote the mean and covariance of the reverse transition, respectively.
Following the noise-prediction parameterization, the denoising network predicts the injected Gaussian noise $\hat{\boldsymbol{\epsilon}} = (\mathbf{Y}_t - \sqrt{\bar{\alpha}_t}\,\mathbf{Y}_0)\,/\,\sqrt{1-\bar{\alpha}_t} = f_\theta(\mathbf{Y}_t,t,\mathbf{C})$, which induces the mean function $\mu_\theta(\mathbf{Y}_t,t,\mathbf{C})
=
\frac{1}{\sqrt{\alpha_t}}
\Big(
\mathbf{Y}_t-\frac{\beta_t}{\sqrt{1-\bar{\alpha}_t}}\,f_{\theta}(\mathbf{Y}_t,t,\mathbf{C})
\Big)$.
The training objective minimizes the expected discrepancy between the predicted noise and the injected noise: $\mathcal{L}_{\mathrm{cont}}(\theta)
=
\mathbb{E}
\Big[
d\big(
f_{\theta}(\mathbf{Y}_t,t,\mathbf{C}),\,
\boldsymbol{\epsilon}
\big)
\Big]$, where $d(\cdot,\cdot)$ is typically MSE for continuous targets.

\subsection{Consistency Models}
\label{sub:consistency_models}
Consistency models~\cite{song2023consistencymodels,song2024improved} are a family of generative models that aim to enable stable sampling and one-step inference for the diffusion reverse process by directly predicting the clean target from a randomly sampled noisy target, such that
$
\hat{\mathbf{Y}}_0^{T} = f_{\theta}(\mathbf{Y}_T, T, \mathbf{C}).
$
To achieve this, unlike diffusion models that sample only one noisy target in the forward process, consistency models sample two noisy targets $\mathbf{Y}_{t_1}$ and $\mathbf{Y}_{t_2}$ by sampling two time steps $t_1$ and $t_2$.
The denoising network $f_\theta$ is then required to satisfy two key properties:
\textbf{(i) Boundary condition.}
At noise level $t=0$, the output should match the clean target:
$
f_{\theta}(\mathbf{Y}_{0}, 0, \mathbf{C})
=
\mathbf{Y}_{0}
$;
\textbf{(ii) Self-consistency.}
Outputs across different noise levels should be consistent:
$
f_{\theta}(\mathbf{Y}_{t_1}, t_1, \mathbf{C})
=
f_{\theta}(\mathbf{Y}_{t_2}, t_2, \mathbf{C}).
$

To satisfy the boundary condition, consistency models adopt a skip-connection parameterization:
\begin{equation}\small
f_{\theta}(\mathbf{Y}_t,t,\mathbf{C})
=
c_{\mathrm{skip}}(t)\,\mathbf{Y}_0
+
c_{\mathrm{out}}(t)\,F_{\theta}(\mathbf{Y}_t,t,\mathbf{C}),
\label{eq:cm_skip_connection}
\end{equation}
where $F_{\theta}$ is a free-form neural network, and $c_{\mathrm{skip}}(t)$ and $c_{\mathrm{out}}(t)$ are differentiable functions such that $c_{\mathrm{skip}}(0)=1$ and $c_{\mathrm{out}}(0)=0$.
And to satisfy the self-consistency property, the training objective of consistency models is defined as
\begin{equation}\small
\begin{aligned}
\mathcal{L}_{\mathrm{CM}}(\theta)
&=
\mathbb{E}\Big[
d\Big(
f_{\theta}(\mathbf{Y}_{t_1},t_1,\mathbf{C}),\;
f_{\theta^-}(\mathbf{Y}_{t_2},t_2,\mathbf{C})
\Big)
\Big],
\end{aligned}
\label{eq:cm_objective_graph}
\end{equation}
where $\theta^-$ denotes $\mathrm{stopgrad}(\theta)$.

\section{Methodology}
\label{sec:methodology}

\subsection{Graph Prediction as Conditional Generation \& Identification of the Shortcut Solution}
\label{sub:prediction_as_generation}


In this section, we formulate each prediction task on graphs using node embeddings for simplicity.
Consider a graph $\mathcal{G}=(\mathbf{X},\mathbf{A})$ with a set of $n$ nodes, where $\mathbf{X}\in\mathbb{R}^{n\times d_x}$ is the node feature matrix, $\mathbf{A}\in\{0,1\}^{n\times n}$ is the adjacency matrix.
We denote $\mathbf{Y}\in\mathbb{R}^{n\times d_k}$ as the target node labels for node-level prediction tasks, and $\mathbf{Y}\in\mathbb{R}^{1\times d_k}$ as the target graph label for graph-level prediction tasks. Specifically, $d_k$ denotes the label dimension, where $d_k=K$ for classification tasks with one-hot labels, and $d_k=1$ for regression tasks.

\begin{figure*}[t]
  \centering
  \includegraphics[width=\textwidth]{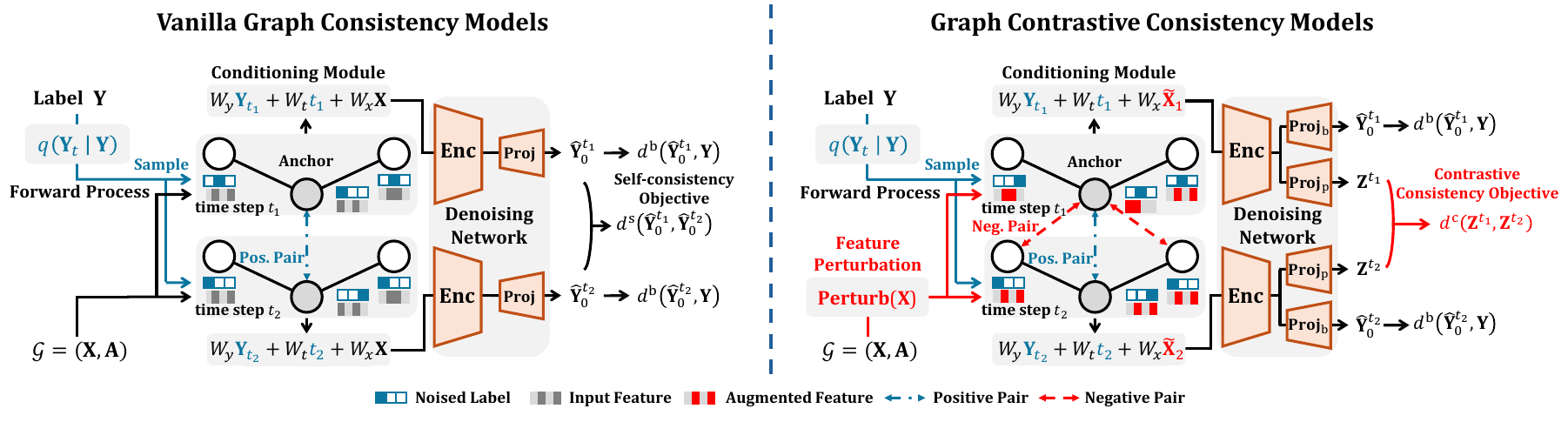}
  \vspace{-0.5cm}
    \caption{Illustration of Graph Contrastive Consistency Models (GCCM).
    Compared with vanilla consistency training for graph prediction, which uses $d^b$ to enforce the boundary condition and $d^s$ to enforce the self-consistency constraint, GCCM introduces contrastive consistency via $d^c$ on representations from two independently noisy labels, and further mitigates the shortcut solution via feature perturbation that breaks identical conditioning across consistency passes.}
  \label{fig_contrastive_framework}
  \vspace{-0.5cm}
\end{figure*}

To formulate prediction tasks on graphs as conditional generation, we adopt a diffusion model and treat the input graph $\mathcal{G}=(\mathbf{X},\mathbf{A})$ as the conditioning signal and the target node/graph labels $\mathbf{Y}$ as the generation target. 
We then apply the forward process as described in Section~\ref{sec:preliminaries} to obtain noisy labels $\mathbf{Y}_{t}$ at noise level $t$.
Notably, for graph-level prediction, we apply the forward process to the graph label $\mathbf{Y}\in\mathbb{R}^{1\times d_k}$ to obtain the noisy graph label $\mathbf{Y}_{t}\in\mathbb{R}^{1\times d_k}$, and then broadcast it to all nodes to form $\mathbf{Y}_{t}\in\mathbb{R}^{n\times d_k}$.
Next, the denoising network $f_\theta$ takes $(\mathbf{Y}_{t},t, \mathcal{G})$ as input and first uses a conditioning module~\cite{zhou2024unifyinggenerationpredictiongraphs,li2025generative} to fuse $(\mathbf{Y}_{t},t, \mathbf{X})$, such that:
\begin{equation}\small
\mathbf{H}_t^{(0)} = 
W_y\,\mathbf{Y}_t
+ W_t\,\mathbf{t}_{\mathrm{emb}}(t)
+ W_x\,\mathbf{X},
\label{eq:additive_fusion}
\end{equation}
Here, $\mathbf{H}^{(0)}\in\mathbb{R}^{n\times d_h}$ denotes the node-level fused initial hidden embedding matrix,
$\mathbf{t}_{\mathrm{emb}}(t)$ is the timestep embedding,
$W_y$, $W_t$, and $W_x$ are learnable projection matrices that map $\mathbf{Y}_{t}$, $\mathbf{t}_{\mathrm{emb}}(t)$, and $\mathbf{X}$ into the same $d_h$-dimensional space.
A graph encoder $\mathrm{Enc}(\cdot)$ is then applied to take $(\mathbf{H}_t^{(0)}, \mathbf{A})$ as inputs and update the node embeddings to $\mathbf{H}_t$. Finally, a predictive projection head $\mathrm{Proj}_{p}(\cdot)$ is applied to $\mathbf{H}_t$ to estimate the clean node/graph labels $\hat{\mathbf{Y}}_0^{t}$:
\begin{equation}\small
\begin{aligned}
\hat{\mathbf{Y}}_0^{t}
&= f_\theta\big(\mathbf{Y}_t, t, \mathcal{G}\big)
 = f_\theta\big(\mathbf{H}^{(0)}_t,\mathbf{A}\big) 
= \mathrm{Proj}_{p}\left(\mathrm{Enc}\left(\mathbf{H}^{(0)}_t, \mathbf{A}\right)\right)
\end{aligned}
\label{eq:pred_mapping}
\end{equation}
To accelerate diffusion models, a common strategy is to adopt consistency models for stable sampling and one-step inference.
However, the consistency models in Section~\ref{sub:consistency_models} are not directly tailored to prediction settings, mainly because their skip-connection parameterization in Eq.~\ref{eq:cm_skip_connection} assumes that the clean labels $\mathbf{Y}_0$ and the model output lie in the same continuous space and can be combined additively.
For tasks like graph classification, the model outputs a categorical distribution, whereas the clean labels are discrete class labels.
As a result, directly enforcing the boundary condition through additive mixing is less natural and usually requires task-specific adaptations.
To address this issue, PCL~\cite{li2025generative} enhances consistency models by enforcing the boundary condition through an explicit supervised objective, rather than an additive skip-connection parameterization, which allows the model to accommodate different types of labels.
Specifically, unlike the vanilla consistency objective in Eq.~\ref{eq:cm_objective_graph}, PCL adopts a triadic distance formulation to jointly enforce these two properties, yielding the following training objective:
\begin{equation}\small
\begin{aligned}
\mathcal{L}(\theta)
 = \mathbb{E}\Big[
\lambda_1 \Big(
d^b\left(
\hat{\mathbf{Y}}_0^{t_1}, \mathbf{Y}
\right)
+
d^b\left(
\hat{\mathbf{Y}}_0^{t_2}, \mathbf{Y}
\right)
\Big) 
+ \lambda_2\, d^s\left(
\hat{\mathbf{Y}}_0^{t_1},
\hat{\mathbf{Y}}_0^{t_2}
\right)
\Big].
\end{aligned}
\label{equ_loss_function}
\end{equation}
Here, the boundary condition objective $d^b(\cdot,\cdot)$ is instantiated as a task-specific supervised loss, e.g., cross-entropy for classification tasks and mean absolute error for regression tasks on graphs, while the self-consistency objective $d^s(\cdot,\cdot)$ is typically implemented using the MSE between the two predictions.
For notational convenience, throughout this paper we instantiate $d^s$ as the MSE distance unless otherwise specified.
The overall framework of PCL for graph prediction is illustrated in Fig.~\ref{fig_contrastive_framework}.


While the above consistency training framework is simple and effective, we find that minimizing the training objective in Eq.~\eqref{equ_loss_function} can lead to a shortcut solution in graph prediction settings.
The underlying reason is that the self-consistency objective $d^s$ enforces agreement between two predictions $\hat{\mathbf{Y}}_0^{t_1} := f_{\theta}(\mathbf{Y}_{t_1}, t_1, \mathcal{G})$ and $\hat{\mathbf{Y}}_0^{t_2} := f_{\theta}(\mathbf{Y}_{t_2}, t_2, \mathcal{G})$, both of which are conditioned on the same input graph $\mathcal{G}$. 
Under this setup, the only factors that vary across the two passes are the noisy labels $\mathbf{Y}_t$ and the corresponding time step $t$. 
Consequently, reducing the influence of $(\mathbf{Y}_t, t)$ on the predictions becomes an easy way to satisfy the agreement requirement, which can be achieved by down-weighting the corresponding fusion terms via $W_y$ and $W_t$ in Eq.~\eqref{eq:additive_fusion}. 
Moreover, the boundary condition objective $d^b$ does not prevent the model from converging to such a shortcut solution, as we will formalize in the following lemma:

\begin{lemma}
\label{lem:mse_shortcut_family}
\textcolor{blue}{}
Consider the training objective in Eq.~\eqref{equ_loss_function} $\mathcal{L}(\theta) = \mathbb{E}\left[\lambda_1\left(d^b(\hat{\mathbf{Y}}_0^{t_1}, \mathbf{Y}) + d^b(\hat{\mathbf{Y}}_0^{t_2}, \mathbf{Y})\right) + \lambda_2\, d^s(\hat{\mathbf{Y}}_0^{t_1}, \hat{\mathbf{Y}}_0^{t_2})\right]$,
where each predicted label $\hat{\mathbf{Y}}_0^{t}$ can be rewritten as $\hat{\mathbf{Y}}_0^{t}
= f_\theta\big(\mathbf{Y}_t, t, \mathcal{G}\big)
= f_\theta\big(\mathbf{H}^{(0)}_t, \mathbf{A}\big) 
= f_\theta\Big(
W_y\,\mathbf{Y}_t
+ W_t\,\mathbf{t}_{\mathrm{emb}}(t)
+ W_x\,\mathbf{X},
\mathbf{A}
\Big)$.
Minimizing the above objective (Eq.~\eqref{equ_loss_function}) could potentially lead the denoising network $f_\theta$ to a shortcut solution set, such that the noisy labels $\mathbf{Y}_t$ and timestep $t$ are ignored:
\begin{equation}\small
\mathcal{F}_{\mathrm{sc}} := \{\theta:\; W_y=\mathbf{0}_{d_k\times d_h},\, W_t=\mathbf{0}_{d_t\times d_h}\},
\label{eq:shortcut_family}
\end{equation}
where $d_k$ denotes the dimension of the input label.
\end{lemma}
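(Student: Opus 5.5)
The statement is of the ``could potentially'' type, so the plan is to show two things. First, the shortcut set $\mathcal{F}_{\mathrm{sc}}$ makes the self-consistency term exactly zero. Second, restricted to $\mathcal{F}_{\mathrm{sc}}$, the remaining objective is exactly the supervised loss of a deterministic predictor. Consequently a minimizer inside $\mathcal{F}_{\mathrm{sc}}$ attains the smallest possible consistency penalty together with the best deterministic boundary loss, and nothing in Eq.~\eqref{equ_loss_function} pushes the optimizer out of this set.

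\textbf{Step 1: consistency vanishes on the shortcut set.} Fix $\theta\in\mathcal{F}_{\mathrm{sc}}$. By Eq.~\eqref{eq:additive_fusion} we have $\mathbf{H}^{(0)}_{t}=W_x\mathbf{X}$ for every $t$ and every $\mathbf{Y}_t$. Hence $\hat{\mathbf{Y}}_0^{t_1}=\hat{\mathbf{Y}}_0^{t_2}=g_\theta(\mathcal{G}):=\mathrm{Proj}_p(\mathrm{Enc}(W_x\mathbf{X},\mathbf{A}))$ almost surely over $(t_1,t_2,\mathbf{Y}_{t_1},\mathbf{Y}_{t_2})$. Since $d^s$ is MSE, $d^s(\hat{\mathbf{Y}}_0^{t_1},\hat{\mathbf{Y}}_0^{t_2})=0$, which is its global minimum because $d^s\ge 0$.

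\textbf{Step 2: reduction to a deterministic objective.} On $\mathcal{F}_{\mathrm{sc}}$ the loss collapses to
\[
\mathcal{L}(\theta)=2\lambda_1\,\mathbb{E}_{(\mathcal{G},\mathbf{Y})}\big[d^b(g_\theta(\mathcal{G}),\mathbf{Y})\big].
\]
This is precisely (twice) the standard supervised loss of the deterministic predictor $g_\theta$. So $\inf_{\mathcal{F}_{\mathrm{sc}}}\mathcal{L}=2\lambda_1\,\mathcal{L}^{\mathrm{det}}_{*}$, where $\mathcal{L}^{\mathrm{det}}_{*}$ is the best deterministic risk over the encoder and projection parameters. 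The boundary term by itself does not exclude this set: it rewards predicting $\mathbf{Y}$ from whatever inputs are available, and $\mathcal{G}$ alone suffices for that.

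\textbf{Step 3: comparison with arbitrary $\theta$ and why the shortcut is attractive.} Take any $\theta$ and write $\mathcal{L}(\theta)=2\lambda_1 B(\theta)+\lambda_2 S(\theta)$, where $B$ is the boundary risk and $S\ge 0$ is the consistency risk. A solution with $W_y\neq 0$ can lower $B$ below $2\lambda_1\mathcal{L}^{\mathrm{det}}_{*}$ only by exploiting information in $\mathbf{Y}_t$, which is a noisy copy of the label. But $\mathbf{Y}_{t_1}$ and $\mathbf{Y}_{t_2}$ carry independent noise, so any label-dependence of this kind makes the two outputs differ and forces $S>0$. In the high-noise regime, where $t$ is near $T$ and $\mathbf{Y}_t$ is nearly independent of $\mathbf{Y}$, exploiting $\mathbf{Y}_t$ gives essentially no gain in $B$ while still incurring a positive $S$.

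I would make this precise with a simple argument. Using the triangle inequality for the MSE between the two predictions, or a variance decomposition $S\ge \mathrm{Var}_{\mathbf{Y}_t,t}[\hat{\mathbf{Y}}_0^{t}\mid\mathcal{G}]$ up to constants, any reduction in $B$ obtained through $\mathbf{Y}_t$ is paid for by an increase in $S$. The trade-off therefore favours the shortcut whenever $\lambda_2$ is large relative to $\lambda_1$, or the noise levels are high. In addition, points of $\mathcal{F}_{\mathrm{sc}}$ are stationary in the $W_y$ and $W_t$ directions whenever the first-order gain in $B$ does not exceed the first-order cost in $S$.

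\textbf{Expected main obstacle.} Step 3 is where the difficulty lies, because the lemma does not claim that every global minimizer lies in $\mathcal{F}_{\mathrm{sc}}$; it claims only that minimization may converge there. I would therefore prove the weaker statement rigorously: $\mathcal{F}_{\mathrm{sc}}$ attains the minimal value of the consistency term, and on it the full objective equals the optimal deterministic risk. I would then supply the variance-based trade-off only as the argument explaining why gradient descent is drawn toward, and is not pushed away from, $W_y=W_t=\mathbf{0}$.
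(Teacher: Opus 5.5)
Your Steps 1 and 2 are exactly the paper's proof. The paper also argues only sufficiency: on $\mathcal{F}_{\mathrm{sc}}$ both fused inputs equal $W_x\mathbf{X}$, so $d^s=0$ and $\mathcal{L}$ collapses to $2\lambda_1\mathbb{E}[d^b(f_\theta(W_x\mathbf{X},\mathbf{A}),\mathbf{Y})]$, which is a deterministic supervised loss. For the ``could potentially'' claim, that part of your proposal is sound and matches the paper.

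Step 3, however, is wrong, and you should not offer it even as the heuristic explanation.
\begin{itemize}
\item \textbf{The first-order trade-off does not exist.} Because $d^s$ is a squared distance that vanishes identically on $\mathcal{F}_{\mathrm{sc}}$, its gradient is zero at every point of $\mathcal{F}_{\mathrm{sc}}$. So the ``first-order cost in $S$'' is always $0$, and $\lambda_2$ plays no role at first order. A point of $\mathcal{F}_{\mathrm{sc}}$ is stationary only if $\nabla_{W_y,W_t}B=0$ exactly, and this generically fails: the gain from using $\mathbf{Y}_t$ is linear in the perturbation, while the consistency penalty is quadratic.
\item \textbf{A concrete example.} Take $\hat y=w_xx+w_yy_t$ with $y_{t_i}=a_iy+\sigma_i\epsilon_i$ and a squared boundary loss. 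At $w_y=0$ and the least-squares $w_x^*$, one gets $\partial_{w_y}\mathcal{L}=-2\lambda_1(a_1+a_2)\,\mathbb{E}[(y-w_x^*x)^2]<0$. The minimizer therefore has $w_y\neq 0$ for every finite $\lambda_2$, and reaches $\mathcal{F}_{\mathrm{sc}}$ only as $\lambda_2/\lambda_1\to\infty$.
\item \textbf{The variance bound is false.} Your bound $S\ge\mathrm{Var}[\hat{\mathbf{Y}}_0^t\mid\mathcal{G}]$ does not hold. With noise independent given $(\mathbf{Y},\mathcal{G})$, $S$ controls only $\mathrm{Var}[\hat{\mathbf{Y}}_0^t\mid\mathbf{Y},\mathcal{G}]$. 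It does not control the variation that tracks $\mathbf{Y}$, which is exactly what lowers $B$.
\end{itemize}
Consequently, your claim that ``nothing pushes the optimizer out of this set'' is false in general.

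What the paper's proof tacitly uses, when it says ``the boundary condition objective is minimized when $f_\theta(\cdot)=\mathbf{Y}$'', is realizability. Suppose the deterministic predictor can drive $d^b$ to its infimum $0$ on the training labels. Then your Step 2 gives $\inf_{\mathcal{F}_{\mathrm{sc}}}\mathcal{L}=0$. This is the global lower bound of $\mathcal{L}$, since $d^b,d^s\ge 0$, so $\mathcal{F}_{\mathrm{sc}}$ attains (or approaches) the global infimum. Replace Step 3 with that observation and state the realizability assumption explicitly.
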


A formal proof is provided in Appendix~\ref{sec:proof_mse_shortcut}.
Under the shortcut solution set $\mathcal{F}_{\mathrm{sc}}$ characterized in Lemma~\ref{lem:mse_shortcut_family}, Eq.~\eqref{eq:additive_fusion} simplifies to
$
\mathbf{H}^{(0)}_t = W_x\,\mathbf{X},
$
and the denoising network collapses into a deterministic predictor $g_\theta(\cdot)$ that depends only on the input graph $\mathcal{G}$, i.e.,
$
f_\theta(\mathbf{Y}_t, t, \mathcal{G}) \;:=\; g_\theta(\mathcal{G}).
$
Since both predictions become identical regardless of the noise level, the self-consistency objective attains its minimum value $0$, and the training objective in Eq.~\eqref{equ_loss_function} can be simplified as:
$\mathcal{L}(\theta)
=
\mathbb{E}\Big[
\lambda_1 \Big(
d^b\left(
g_\theta(\mathcal{G}), \mathbf{Y}
\right)
+
d^b\left(
g_\theta(\mathcal{G}), \mathbf{Y}
\right)
\Big)
+ \lambda_2 \cdot 0
\Big] 
=
2\lambda_1\,\mathbb{E}\Big[
d^b\left(
g_\theta(\mathcal{G}), \mathbf{Y}
\right)
\Big]
+ \lambda_2 \cdot 0$.
The resulting objective is equivalent to a standard deterministic supervised loss, which the model can minimize in the same way as a deterministic predictor to reduce the overall loss. 
Therefore, the boundary condition objective $d^b$ does not impose any additional restriction that prevents the model from taking the shortcut solution.

\subsection{Mitigating the Shortcut Solution via the Contrastive Consistency Objective}
\label{sub:contrastive_consistency}

The reason why minimizing the training objective in Eq.~\eqref{equ_loss_function} could lead to the shortcut solution in Lemma~\ref{lem:mse_shortcut_family} is that the self-consistency objective $d^s$ only enforces pairwise matching between two predictions from the same underlying instance (i.e., a node for node-level prediction and a graph for graph-level prediction), and thus can be trivially satisfied by ignoring the noisy labels and the corresponding time step.
Since the boundary condition objective $d^b$ does not prevent this shortcut, mitigating it requires modifying the self-consistency objective itself.
To mitigate this shortcut, we incorporate negative pairs across different instances and propose a graph contrastive consistency objective to replace the self-consistency objective.
This introduces an extra separation requirement beyond positive pairwise matching, making the shortcut solution in Eq.~\eqref{eq:shortcut_family} no longer trivially sufficient to satisfy the self-consistency constraint.

Specifically, given a mini-batch of $N$ instances with clean labels $\{\mathbf{y}_{0,i}\}_{i=1}^{N}$,
we apply the forward process at two time steps $t_1$ and $t_2$ to obtain noisy label sets
$\{\mathbf{y}_{t_1,i}\}_{i=1}^{N}$ and $\{\mathbf{y}_{t_2,i}\}_{i=1}^{N}$.
Taking $\mathbf{y}_{t_1,i}$ as the anchor, we treat $(\mathbf{y}_{t_1,i}, \mathbf{y}_{t_2,i})$ as a positive pair and use the remaining noisy labels in the second set as negatives, i.e., $\{\mathbf{y}_{t_2,j}\}_{j\neq i}$.
In this setting, we compute the contrastive consistency objective on the corresponding predicted clean labels
$\{\hat{\mathbf{y}}_{0,i}^{t_1}\}_{i=1}^{N}$ and $\{\hat{\mathbf{y}}_{0,i}^{t_2}\}_{i=1}^{N}$ in label space, to align the positive pairs while separating the negatives.
However, directly computing the contrastive consistency objective in the label space may induce invariance and information compression.
Since the prediction outputs in this space are primarily optimized to satisfy the boundary condition objective $d^b(\cdot,\cdot)$, such effects can undesirably reshape the predictions and interfere with the supervision imposed by $d^b(\cdot,\cdot)$.

To alleviate this issue, we compute the contrastive consistency objective in a latent space, allowing alignment and separation of latent representations induced by instances, while avoiding direct constraints on the prediction outputs that are required to satisfy the boundary condition objective.
Specifically, we apply an additional projection head $\mathrm{Proj}_{c}(\cdot)$ to the shared encoder output $\mathbf{h}_t$ to obtain a latent representation $\mathbf{z}_i^{t}\in\mathbb{R}^{d_z}$ for each noisy label $\mathbf{y}_{t,i}$ in the mini-batch.
The contrastive consistency objective $d^c(\cdot,\cdot)$ is then computed on these latent representations, while the boundary condition objective $d^b(\cdot,\cdot)$ is kept unchanged in the label space.
With the latent representations $\{\mathbf{z}_i^{t_1}\}_{i=1}^{N}$ and $\{\mathbf{z}_i^{t_2}\}_{i=1}^{N}$, we define the contrastive consistency loss from $t_1$ to $t_2$ as
\begin{equation}\small
\begin{aligned}
\mathcal{L}_{\mathrm{CL}}^{t_1\to t_2}
=
-\frac{1}{N}
\sum_{i=1}^{N}
\log
\frac{
\exp\left(\operatorname{sim}\left(\mathbf{z}_i^{t_1},\mathbf{z}_i^{t_2}\right)/\tau\right)
}{
\sum_{j=1}^{N}
\exp\left(\operatorname{sim}\left(\mathbf{z}_i^{t_1},\mathbf{z}_j^{t_2}\right)/\tau\right)
}, 
\text{with}\;
\operatorname{sim}(\mathbf{z}_{i}^{t_1},\mathbf{z}_{j}^{t_2})
=
\frac{(\mathbf{z}_{i}^{t_1})^{T}\mathbf{z}_{j}^{t_2}}
{\|\mathbf{z}_{i}^{t_1}\|\,\|\mathbf{z}_{j}^{t_2}\|}.
\end{aligned}
\label{equ_latent_contrastive_obj}
\end{equation}
where $\operatorname{sim}(\cdot,\cdot)$ denotes cosine similarity, and $\tau$ is a temperature parameter.
The reverse direction from $t_2$ to $t_1$ is defined analogously.
For notational convenience, we stack each latent representation in the mini-batch into a matrix
$\mathbf{Z}^{t}\in\mathbb{R}^{N\times d_z}$, whose $i$-th row is $\mathbf{z}_{i}^{t}$.
The overall contrastive consistency objective $d^c(\cdot,\cdot)$ is defined as
\begin{equation}\small
d^c(\mathbf{Z}^{t_1}, \mathbf{Z}^{t_2})
=
\frac{1}{2}\Big(
\mathcal{L}_{\mathrm{CL}}^{t_1\to t_2}
+
\mathcal{L}_{\mathrm{CL}}^{t_2\to t_1}
\Big).
\label{eq:dc_symmetric}
\end{equation}
Replacing the self-consistency term $d^s(\cdot,\cdot)$ in Eq.~\eqref{equ_loss_function} with the contrastive consistency objective $d^c(\cdot,\cdot)$, the overall training objective is updated as:
\begin{equation}\small
\mathcal{L}
=
\mathbb{E}\Big[
\lambda_1 \Big(
d^b(\hat{\mathbf{Y}}_0^{t_1}, \mathbf{Y})
+ d^b(\hat{\mathbf{Y}}_0^{t_2}, \mathbf{Y})
\Big)
+ \lambda_2\, d^c(\mathbf{Z}^{t_1}, \mathbf{Z}^{t_2})
\Big].
\label{eq:gccm_objective}
\end{equation}
Here, $\hat{\mathbf{Y}}_0^{t}\in\mathbb{R}^{N\times d_k}$ denotes the predicted clean labels in the mini-batch, while $\mathbf{Y}\in\mathbb{R}^{N\times d_k}$ denotes the corresponding ground-truth clean labels stacked across the same mini-batch.




Recall from Lemma~\ref{lem:mse_shortcut_family} that, under the original objective in Eq.~\eqref{equ_loss_function}, the shortcut solution set $\mathcal{F}_{\mathrm{sc}}$ is sufficient to minimize the self-consistency term $d^s$, which reduces the joint objective to a standard supervised objective.
The following lemma shows that $\mathcal{F}_{\mathrm{sc}}$ is no longer sufficient to minimize the contrastive consistency objective $d^c$:

\begin{lemma}
\label{lem:cl_shortcut_not_sufficient}
Consider the contrastive consistency objective defined in Eq.~\eqref{eq:dc_symmetric}.
Under the shortcut solution set $\mathcal{F}_{\mathrm{sc}}$ defined in Eq.~\eqref{eq:shortcut_family}, i.e.,
$
\mathcal{F}_{\mathrm{sc}} := \{\theta:\; W_y=\mathbf{0}_{d_k\times d_h},\, W_t=\mathbf{0}_{d_t\times d_h}\}.
$
$\mathcal{F}_{\mathrm{sc}}$ is generally no longer sufficient to minimize the contrastive consistency objective $d^c$.
\end{lemma}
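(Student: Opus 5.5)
Under $\mathcal{F}_{\mathrm{sc}}$ the fused embedding reduces to $\mathbf{H}^{(0)}_t=W_x\mathbf{X}$, so the encoder output $\mathbf{h}_t$ and hence every latent $\mathbf{z}_i^{t}=\mathrm{Proj}_c(\mathbf{h}_{t,i})$ no longer depend on $(\mathbf{Y}_t,t)$. We can therefore write $\mathbf{z}_i^{t_1}=\mathbf{z}_i^{t_2}=:\mathbf{z}_i=g(\mathcal{G})_i$ for a map $g$ that depends only on the input graph. I will first substitute this into Eq.~\eqref{equ_latent_contrastive_obj}. Writing $\mathbf{u}_i=\mathbf{z}_i/\|\mathbf{z}_i\|$, each positive similarity equals $1$, and the direction $t_1\to t_2$ becomes
\[
\mathcal{L}_{\mathrm{CL}}^{t_1\to t_2}=\frac{1}{N}\sum_{i=1}^N\log\Big(1+\sum_{j\neq i}\exp\big((\mathbf{u}_i^{\top}\mathbf{u}_j-1)/\tau\big)\Big).
\]
The reverse direction gives the same expression, so $d^c$ equals this quantity. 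The key point is that, unlike $d^s$, this is strictly positive and is not automatically minimized. Its value depends on how the $\mathbf{u}_i$ are arranged on the sphere, so collapsing $(\mathbf{Y}_t,t)$ zeroes out only the alignment part, not the uniformity part.

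Second, I will show that membership in $\mathcal{F}_{\mathrm{sc}}$ does not control the negative terms. Take an admissible $\theta\in\mathcal{F}_{\mathrm{sc}}$ whose encoder maps distinct instances to the same latent direction. Two natural cases are $W_x=\mathbf{0}$, or a weight choice in which $\mathrm{Proj}_c$ has rank one, so that all $\mathbf{u}_i$ coincide. For that $\theta$ we get $d^c=\log N$, which is the worst value among shortcut solutions with equal positives. Next I exhibit a $\theta'\in\mathcal{F}_{\mathrm{sc}}$ for which the $\mathbf{u}_i$ are spread out, for example orthogonal when $N\le d_z$. There $d^c=\log(1+(N-1)e^{-1/\tau})<\log N$. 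So the condition $W_y=W_t=\mathbf{0}$ leaves $d^c$ ranging over an interval, and being in $\mathcal{F}_{\mathrm{sc}}$ is not sufficient for reaching the minimum. This contrasts with Lemma~\ref{lem:mse_shortcut_family}, where every element of $\mathcal{F}_{\mathrm{sc}}$ gives $d^s=0$.

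Third, to argue that the set is generally insufficient, I will note that any shortcut solution is limited by the input graphs themselves. If two instances in the mini-batch have identical or indistinguishable conditioning, for example message-passing-equivalent neighbourhoods or isomorphic graphs, then $g$ forces $\mathbf{u}_i=\mathbf{u}_j$. Their negative term then stays at $1$ and $d^c$ is bounded below by a positive gap. Outside $\mathcal{F}_{\mathrm{sc}}$, the varying noisy labels $\mathbf{y}_{t,i}$ would allow those latents to differ. So outside this degenerate regime, minimizers of $d^c$ are not characterized by $\mathcal{F}_{\mathrm{sc}}$.

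I expect the main obstacle to be stating ``generally'' precisely. A rigorous claim that no shortcut solution attains the global infimum over all $\theta$ requires assumptions on expressivity and on the distribution of the labels. I would therefore state the result as non-sufficiency: I exhibit elements of $\mathcal{F}_{\mathrm{sc}}$ with strictly suboptimal $d^c$, plus the indistinguishable-instance case as the generic mechanism, rather than prove global suboptimality.
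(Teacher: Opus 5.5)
Your argument is correct and follows the paper's proof: substitute $\mathbf{z}_i^{t_1}=\mathbf{z}_i^{t_2}$ so every positive similarity equals $1$, reduce $d^c$ to a negatives-only term, and evaluate a collapsed element of $\mathcal{F}_{\mathrm{sc}}$ (the paper uses $W_x=\mathbf{0}$) to get $\log N$. Your comparison with a spread-out $\theta'\in\mathcal{F}_{\mathrm{sc}}$ giving $\log(1+(N-1)e^{-1/\tau})<\log N$ is what actually establishes non-sufficiency, because the paper's observation $\log N>0$ alone does not rule out minimality (the loss is strictly positive for every configuration); just note that a rank-one $\mathrm{Proj}_c$ makes the $\mathbf{u}_i$ collinear only up to sign, so a constant-output head is the cleaner collapsed example.
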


A formal proof is provided in Appendix~\ref{sec:proof_not_sufficient}.
Lemma~\ref{lem:cl_shortcut_not_sufficient} follows from the fact that, unlike the self-consistency objective that performs pairwise matching between two predictions, the contrastive consistency objective couples each noisy label in one noisy set with the other noisy set through negative pairs, and thus imposes an additional separation requirement across different instances.
As a result, the shortcut solution that ignores $(\mathbf{Y}_t,t)$ by down-weighting $W_y$ and $W_t$ in Eq.~\eqref{eq:additive_fusion} cannot in general satisfy the positive-agreement and negative-separation requirements simultaneously.
Therefore, unlike the self-consistency objective, the contrastive consistency objective discourages the denoising network from collapsing into a purely deterministic predictor that ignores $(\mathbf{Y}_t,t)$.

\subsection{Enhancing the Contrastive Consistency Objective with Feature Perturbation}
\label{subsec:feature_aug}
Although the proposed contrastive consistency objective $d^c(\cdot,\cdot)$ introduces negative pairs across different instances, making the shortcut solution set $\mathcal{F}_{\mathrm{sc}}$ no longer trivially sufficient to minimize $d^c(\cdot,\cdot)$, the denoising network may still potentially learn to reduce its reliance on $(\mathbf{Y}_t,t)$ by down-weighting $W_y$ and $W_t$ in Eq.~\eqref{eq:additive_fusion} during training.
The underlying reason is that the contrastive consistency objective contains the positive-pair agreement requirement, which the shortcut solution set naturally satisfies.
Under the shortcut solution set defined in Eq.~\eqref{eq:shortcut_family}, the fusion in Eq.~\eqref{eq:additive_fusion} reduces to $\mathbf{H}_t^{(0)}=W_x\mathbf{X}$, so the output latent representation matrix $\mathbf{Z}^{t}$ is independent of $(\mathbf{Y}_t,t)$:
\begin{equation}\small
\begin{aligned}
\mathbf{Z}^{t}
= f_\theta\big(\mathbf{Y}_t, t, \mathcal{G}\big)
= f_\theta\big(\mathbf{H}^{(0)}_t,\mathbf{A}\big) 
= \mathrm{Proj}_{c}\left(\mathrm{Enc}\left(\mathbf{H}^{(0)}_t, \mathbf{A}\right)\right) 
= \mathrm{Proj}_{c}\left(\mathrm{Enc}\left(W_x\mathbf{X}, \mathbf{A}\right)\right).
\end{aligned}
\label{eq:contrastive_mapping}
\end{equation}
Since the graph encoder receives the same input features $\mathbf{X}$ and $\mathbf{A}$ at $t_1$ and $t_2$, Eq.~\eqref{eq:contrastive_mapping} implies that the resulting latent representations are identical, i.e.,
$
\mathbf{Z}^{t_1}=\mathbf{Z}^{t_2}.
$
This equality makes positive-pair alignment trivial, since for each instance $i$ in the mini-batch we have $\mathbf{z}_i^{t_1}=\mathbf{z}_i^{t_2}$.
Substituting this into Eq.~\eqref{equ_latent_contrastive_obj} yields $\mathcal{L}_{\mathrm{CL}}^{t_1\to t_2}
=
-\frac{1}{N}
\sum_{i=1}^{N}
\log
\frac{
\exp(1/\tau)
}{
\sum_{j=1}^{N}
\exp\left(
\operatorname{sim}\left(\mathbf{z}_i^{t_1},\mathbf{z}_j^{t_2}\right)/\tau
\right)
}$.
In this form, the positive-pair agreement term $\operatorname{sim}\left(\mathbf{z}_i^{t_1},\mathbf{z}_i^{t_2}\right)$ becomes a constant, so the contrastive consistency objective is reduced to separating negative pairs across different instances only.
Crucially, this negative-pair separation requirement cannot constrain the model from moving toward the shortcut solution $\mathcal{F}_{\mathrm{sc}}$.
In this case, the fused representation in Eq.~\eqref{eq:additive_fusion} reduces to $\mathbf{H}_t^{(0)}=W_x\mathbf{X}$, so negative pairs can still be separated based on the input features, without relying on $(\mathbf{Y}_t,t)$.
Therefore, although the shortcut solution is no longer trivially sufficient to minimize the contrastive consistency objective, the optimization may still move toward it to make positive-pair alignment easier, and this tendency cannot be constrained by the negative-pair separation requirement.

To further prevent positive-pair agreement from being trivially satisfied by ignoring $(\mathbf{Y}_t,t)$, we propose feature perturbation on the input node features to break the identical conditioning input across two time steps $t_1$ and $t_2$, such that the shortcut solution set $\mathcal{F}_{\mathrm{sc}}$ can no longer trivially yield identical latent representations $\mathbf{Z}^{t_1}=\mathbf{Z}^{t_2}$.
Concretely, we treat the diffusion forward process defined in Section~\ref{sec:preliminaries} as a generic feature perturbation operator and inject noise into the input features accordingly.
Unlike the forward process for labels, which samples the noise level from the full range $\{1,\ldots,T\}$, feature perturbation only uses a small noise level controlled by a total perturbation time step $T_{\mathrm{per}}$ to avoid overly aggressive perturbations to the conditioning features.
By applying this lightweight perturbation twice independently, we obtain $\tilde{\mathbf{X}}_1$ and $\tilde{\mathbf{X}}_2$, which yield two perturbed graph views
$\tilde{\mathcal{G}}_1=(\tilde{\mathbf{X}}_1,\mathbf{A})$ and
$\tilde{\mathcal{G}}_2=(\tilde{\mathbf{X}}_2,\mathbf{A})$.
The perturbations are intentionally kept small and mainly serve to introduce non-identical conditioning features across the two views.
In this way, the shortcut solution $\mathcal{F}_{\mathrm{sc}}$ is in general no longer sufficient to guarantee the positive-pair agreement $\mathbf{Z}^{t_1}=\mathbf{Z}^{t_2}$.

\begin{lemma}
\label{lem:xaug_breaks_shortcut}
Let $t_1$ and $t_2$ be two noise levels sampled in consistency training.
Given two inputs $(\mathbf{Y}_{t_1}, t_1, \tilde{\mathcal{G}}_1)$ and $(\mathbf{Y}_{t_2}, t_2, \tilde{\mathcal{G}}_2)$, where
$\tilde{\mathcal{G}}_1=(\tilde{\mathbf{X}}_1,\mathbf{A})$ and
$\tilde{\mathcal{G}}_2=(\tilde{\mathbf{X}}_2,\mathbf{A})$
are independently perturbed inputs with $\tilde{\mathbf{X}}_1\neq \tilde{\mathbf{X}}_2$,
consider the shortcut solution set $\mathcal{F}_{\mathrm{sc}}$ defined in Eq.~\eqref{eq:shortcut_family}, i.e.,
$
\mathcal{F}_{\mathrm{sc}} := \{\theta:\; W_y=\mathbf{0}_{d_k\times d_h},\, W_t=\mathbf{0}_{d_t\times d_h}\}.
$
For any $\theta\in\mathcal{F}_{\mathrm{sc}}$, the latent representation $\mathbf{Z}^{t}$ produced by the contrastive head at time step $t$ can be written as
$
\mathbf{Z}^{t}
=
\mathrm{Proj}_{c}\left(
\mathrm{Enc}\left(
W_x\tilde{\mathbf{X}},\mathbf{A}
\right)
\right).
\label{eq:z_under_shortcut}
$
Then, $\mathcal{F}_{\mathrm{sc}}$ is in general no longer sufficient to guarantee the positive-pair agreement
$\mathbf{Z}^{t_1}=\mathbf{Z}^{t_2}$.
\end{lemma}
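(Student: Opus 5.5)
The plan has two parts. First we derive the stated form of $\mathbf{Z}^t$ under $\mathcal{F}_{\mathrm{sc}}$. Then we give an explicit witness showing that $\mathcal{F}_{\mathrm{sc}}$ does not force $\mathbf{Z}^{t_1}=\mathbf{Z}^{t_2}$.

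\textbf{Reduction.} For $\theta\in\mathcal{F}_{\mathrm{sc}}$, substitute $W_y=\mathbf{0}$ and $W_t=\mathbf{0}$ into the fusion of Eq.~\eqref{eq:additive_fusion}, with $\mathbf{X}$ replaced by the perturbed $\tilde{\mathbf{X}}_k$. This gives $\mathbf{H}^{(0)}_{t_k}=W_x\tilde{\mathbf{X}}_k$ for $k=1,2$. Composing with the shared encoder and the contrastive head, exactly as in Eq.~\eqref{eq:contrastive_mapping}, yields $\mathbf{Z}^{t_k}=\mathrm{Proj}_c(\mathrm{Enc}(W_x\tilde{\mathbf{X}}_k,\mathbf{A}))$. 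This is the displayed form, and it depends on $t_k$ only through the view index $k$.

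\textbf{Non-sufficiency.} Here ``not sufficient'' means $\mathcal{F}_{\mathrm{sc}}$ does not imply $\mathbf{Z}^{t_1}=\mathbf{Z}^{t_2}$ for every $\theta$ in the set. It therefore suffices to exhibit one $\theta\in\mathcal{F}_{\mathrm{sc}}$ and one pair $\tilde{\mathbf{X}}_1\neq\tilde{\mathbf{X}}_2$ with $\mathbf{Z}^{t_1}\neq\mathbf{Z}^{t_2}$. I would use the following construction.
\begin{itemize}[leftmargin=*]
\item Choose $W_x$ injective on the relevant inputs, for example with full column rank whenever $d_h\ge d_x$. Then $\tilde{\mathbf{X}}_1\neq\tilde{\mathbf{X}}_2$ gives $W_x\tilde{\mathbf{X}}_1\neq W_x\tilde{\mathbf{X}}_2$.
\item Choose the encoder as a message-passing layer, $\mathrm{Enc}(\mathbf{H},\mathbf{A})=\sigma((\mathbf{I}+\mathbf{A})\mathbf{H}W)$, with a skip/self-loop term, invertible $W$, and an injective activation such as leaky ReLU. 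Since $\mathbf{I}+\mathbf{A}$ may be singular, keep a residual term $\mathbf{H}+\cdots$ instead, or take a single layer with $\mathbf{A}$ weight zero. Either choice makes the layer injective in $\mathbf{H}$.
\item Choose $\mathrm{Proj}_c$ to be a linear map that is injective on rows.
\end{itemize}

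Each map in the composition is then injective, so the perturbed views give $\mathbf{Z}^{t_1}\neq\mathbf{Z}^{t_2}$. Moreover, since the $\tilde{\mathbf{X}}_k$ are produced by independent continuous (Gaussian) perturbations, they differ almost surely. For generic analytic or piecewise-linear networks, the set of parameters on which $\mathbf{Z}^{t_1}=\mathbf{Z}^{t_2}$ holds has measure zero. This justifies the word ``generally'' in the statement.

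\textbf{Main obstacle.} The hard step is making ``in general'' precise without overclaiming. A degenerate $\theta$ in $\mathcal{F}_{\mathrm{sc}}$, such as $W_x=\mathbf{0}$ or a constant encoder, does satisfy $\mathbf{Z}^{t_1}=\mathbf{Z}^{t_2}$, so the lemma cannot be universal. I would state the result as an existence / generic-position claim. I would also stress that for such degenerate $\theta$ the boundary term $d^b$ cannot be small, because the predictor would ignore $\mathcal{G}$ altogether. A further subtlety is the cosine normalization used in $d^c$: if the goal is positive similarity strictly below $1$, rather than mere inequality of the $\mathbf{Z}$'s, the witness must avoid rows that are positive rescalings of one another. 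The linear injective head handles this if we additionally check that, for the chosen perturbation, some row difference is not parallel to the row itself. This again holds almost surely under Gaussian perturbation.
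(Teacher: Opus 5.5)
Your proposal is correct and follows the paper's route. You use the same reduction to $\mathbf{Z}^{t_k}=\mathrm{Proj}_c(\mathrm{Enc}(W_x\tilde{\mathbf{X}}_k,\mathbf{A}))$ and the same observation that non-identical conditioning inputs break the guarantee; the paper stops at ``the inputs differ, so $\mathbf{Z}^{t_1}=\mathbf{Z}^{t_2}$ is no longer guaranteed,'' and your explicit injective witness in $\mathcal{F}_{\mathrm{sc}}$ supplies the step that inference actually needs, since a constant map sends different inputs to the same output. Two of your asides overreach but are not needed: the residual-layer option is injective only if the residual branch has Lipschitz constant below $1$, and the measure-zero claim fails for ReLU networks, where dead-unit regions give $\mathbf{Z}^{t_1}=\mathbf{Z}^{t_2}$ on an open set of parameters, so rely on the $\mathbf{A}$-weight-zero leaky-ReLU witness (or an injective analytic activation) alone.
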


A formal proof is provided in Appendix~\ref{sec:xaug_breaks_shortcut}.
By Lemma~\ref{lem:xaug_breaks_shortcut}, positive-pair agreement can no longer be easily achieved by reducing the reliance on $(\mathbf{Y}_t,t)$ through down-weighting $W_y$ and $W_t$ in Eq.~\eqref{eq:additive_fusion},
thereby further reducing the tendency to converge toward the shortcut solution during training.

\section{Experiments}
\subsection{Experimental Settings}
\label{sub:experimental_settings}

\textbf{Datasets.}\quad 
We conduct extensive experiments covering different prediction tasks on seven benchmark datasets~\cite{dwivedi2022benchmarkinggraphneuralnetworks,dwivedi2023longrangegraphbenchmark}.
These datasets can be divided into three groups based on the prediction tasks:
(i) graph-level classification on image-based graph datasets (i.e., MNIST and CIFAR10);
(ii) graph-level regression on a molecular graph dataset (i.e., ZINC); and
(iii) node-level classification on long-range graph datasets (i.e., PascalVOC-SP and COCO-SP) and synthetic SBM datasets (i.e., PATTERN and CLUSTER).
All experiments follow standard dataset splits and evaluation metrics adopted in previous works.
More details of the datasets can be found in Appendix~\ref{sub:datasets_descriptions}.

\textbf{Baselines.}\quad
To examine whether our proposed GCCM can enhance deterministic prediction methods in a generative manner, we compare against existing conditional generative graph prediction frameworks, LGD~\cite{zhou2024unifyinggenerationpredictiongraphs} and PCL~\cite{li2025generative}. 
Since LGD originally used GPS~\cite{rampášek2023recipegeneralpowerfulscalable} as the network backbone, we also adopt GPS as the network backbone of both PCL and our GCCM for a fair comparison, which we denote as GPS+PCL and GPS+GCCM.
Please note that GCCM is not tied to any specific backbone and can be applied to other graph neural network architectures. 
Due to space limitations, we further adopt other network backbones to GCCM in Appendix~\ref{sub:cross_backbone}.
For reference, we also include the results of deterministic prediction models (GCN~\cite{kipf2017semisupervisedclassificationgraphconvolutional}, GIN~\cite{xu2019powerfulgraphneuralnetworks}, GAT~\cite{veličković2018graphattentionnetworks}, GatedGCN~\cite{bresson2018residualgatedgraphconvnets}, PNA~\cite{corso2020principalneighbourhoodaggregationgraph}, DGN~\cite{beaini2021directionalgraphnetworks}, GIN-AK+~\cite{zhao2022starssubgraphsupliftinggnn}. SAN~\cite{kreuzer2021rethinkinggraphtransformersspectral}, EGT~\cite{Hussain_2022}, GRIT~\cite{10.5555/3618408.3619379}, and GPS~\cite{rampášek2023recipegeneralpowerfulscalable}).

\begin{table*}[t]
\centering
\caption{Test performance on benchmarks from~\cite{dwivedi2022benchmarkinggraphneuralnetworks, dwivedi2023longrangegraphbenchmark}. 
Results for baselines (except GPS, GRIT, GPS-LGD, and GPS-PCL) are taken from ~\cite{rampášek2023recipegeneralpowerfulscalable}.}
\label{tab:benchmark_results}
\resizebox{\textwidth}{!}{
\begin{tabular}{lccccccc}
\toprule
\textbf{Model} 
& \multicolumn{1}{c}{\textbf{MNIST}}
& \multicolumn{1}{c}{\textbf{CIFAR10}}
& \multicolumn{1}{c}{\textbf{PATTERN}}
& \multicolumn{1}{c}{\textbf{CLUSTER}}
& \multicolumn{1}{c}{\textbf{ZINC}}
& \multicolumn{1}{c}{\textbf{PascalVOC-SP}}
& \multicolumn{1}{c}{\textbf{COCO-SP}}\\
\cmidrule(lr){2-8}
& \textbf{Accuracy} $\uparrow$
& \textbf{Accuracy} $\uparrow$
& \textbf{Accuracy} $\uparrow$
& \textbf{Accuracy} $\uparrow$
& \textbf{MAE} $\downarrow$ 
& \textbf{F1} $\uparrow$
& \textbf{F1} $\uparrow$\\
\midrule

GCN
& 90.705 $\pm$ 0.218 
& 55.710 $\pm$ 0.381 
& 71.892 $\pm$ 0.334 
& 68.498 $\pm$ 0.976
& 0.367 $\pm$ 0.011 
& 0.1268 $\pm$ 0.0060
& 0.0841 $\pm$ 0.0010\\

GIN 
& 96.485 $\pm$ 0.252 
& 55.255 $\pm$ 1.527 
& 85.387 $\pm$ 0.136 
& 64.716 $\pm$ 1.553
& 0.526 $\pm$ 0.051 
& \textemdash 
& \textemdash\\

GAT 
& 95.535 $\pm$ 0.205 
& 64.223 $\pm$ 0.455 
& 78.271 $\pm$ 0.186 
& 70.587 $\pm$ 0.447
& 0.384 $\pm$ 0.007 
& \textemdash 
& \textemdash\\

GatedGCN
& 97.340 $\pm$ 0.143 
& 67.312 $\pm$ 0.311 
& 85.568 $\pm$ 0.088 
& 73.840 $\pm$ 0.326
& 0.282 $\pm$ 0.015 
& 0.2873 $\pm$ 0.0219
& 0.2641 $\pm$ 0.0045\\

PNA
& 97.940 $\pm$ 0.120 
& 70.350 $\pm$ 0.630 
& \textemdash 
& \textemdash
& 0.188 $\pm$ 0.004 
& \textemdash 
& \textemdash\\

DGN
& \textemdash 
& 72.838 $\pm$ 0.417
& 86.680 $\pm$ 0.034 
& \textemdash
& 0.168 $\pm$ 0.003 
& \textemdash 
& \textemdash\\




GIN-AK+
& \textemdash 
& 72.190 $\pm$ 0.130 
& 86.850 $\pm$ 0.057
& \textemdash
& 0.080 $\pm$ 0.001 
& \textemdash 
& \textemdash\\

SAN
& \textemdash 
& \textemdash 
& 86.581 $\pm$ 0.037 
& 76.691 $\pm$ 0.650
& 0.139 $\pm$ 0.006 
& 0.3230 $\pm$ 0.0039
& 0.2592 $\pm$ 0.0158 \\



EGT
& \textbf{98.173 $\pm$ 0.087}
& 68.702 $\pm$ 0.409 
& 86.821 $\pm$ 0.020
& 79.232 $\pm$ 0.348
& 0.108 $\pm$ 0.009 
& \textemdash 
& \textemdash\\


GRIT
& 98.108 $\pm$ 0.111 
& \textbf{76.468 $\pm$ 0.881}
& \textbf{87.196 $\pm$ 0.076}
& \textbf{80.026 $\pm$ 0.277} 
& \textbf{0.059 $\pm$ 0.002} 
& \textemdash 
& \textemdash\\

GPS
& 98.082 $\pm$ 0.114
& 72.356 $\pm$ 0.323
& 86.648 $\pm$ 0.065 
& 77.780 $\pm$ 0.236 
& 0.072 $\pm$ 0.004
& \textbf{0.3793 $\pm$ 0.0171}
& \textbf{0.3453 $\pm$ 0.0056}\\

\midrule

GPS-LGD
& 95.461 $\pm$ 0.613
& 70.163 $\pm$ 0.357
& 85.662 $\pm$ 0.181 
& 76.773 $\pm$ 0.301
& 0.085 $\pm$ 0.001
& 0.3912 $\pm$ 0.0207
& \textemdash\\

GPS-PCL
& 98.114 $\pm$ 0.183
& 71.830 $\pm$ 0.218 
& 86.695 $\pm$ 0.073 
& 78.171 $\pm$ 0.231
& \textbf{0.069 $\pm$ 0.001}
& 0.4196 $\pm$ 0.0086
& 0.3816 $\pm$ 0.0058\\

GPS-GCCM (ours)
& \textbf{98.236 $\pm$ 0.060}
& \textbf{72.502 $\pm$ 0.287}
& \textbf{86.772 $\pm$ 0.042}
& \textbf{78.820 $\pm$ 0.187}
& 0.071 $\pm$ 0.001
& \textbf{0.4219 $\pm$ 0.0102}
& \textbf{0.3856 $\pm$ 0.0021}\\

\bottomrule
\end{tabular}
}
\vspace{-0cm}
\end{table*}

\vspace{-0.2cm}
\subsection{Main Results}
\label{sec:main_results}
We evaluate GCCM on the seven datasets introduced in Section~\ref{sub:experimental_settings}.
For all datasets, we report the mean and standard deviation over five runs with different random seeds.
The results are summarized in Table~\ref{tab:benchmark_results}.
It is clear to see that integrating GCCM into the deterministic base model GPS substantially boosts its performance across most benchmarks.
And compared with previous conditional generative approaches for graph prediction (i.e., LGD and PCL), GCCM consistently achieves superior performance on most datasets. In particular, relative to the previous vanilla consistency model for graph prediction (i.e., PCL), by effectively mitigating the shortcut solution in consistency training, GCCM leads to substantial performance improvements. 
Overall, these results demonstrate that GCCM can serve as an effective generative enhancement to deterministic backbones.

\subsection{Ablation Studies}
\label{subsec:ablation_study}
Beyond demonstrating the overall performance of GCCM, we conduct detailed ablation studies in this section to further verify the effectiveness of each proposed component of our GCCM.

\noindent\textbf{Formulating graph prediction as conditional generation.}\quad
To evaluate the benefits of formulating graph prediction as conditional generation, we introduce a variant that enhances the deterministic model GPS using a generative diffusion formulation, denoted GPS+Diffusion, which follows the design described in Section~\ref{sub:prediction_as_generation}.
To obtain stable predictions at inference time, GPS+Diffusion aggregates predictions from 10 independent noise initializations, each generated via 1000 reverse diffusion steps.
As shown in Table~\ref{tab:ab_results}, GPS+Diffusion achieves better performance on MNIST, PATTERN, and CLUSTER compared to the deterministic GPS baseline.
These improvements highlight the potential of applying conditional generative models to graph prediction tasks.

\begin{table*}[t]
\centering
\caption{Ablation studies on each component of GCCM on five benchmarks from~\cite{dwivedi2022benchmarkinggraphneuralnetworks}.
}
\vspace{-0.2cm}
\label{tab:ab_results}
\resizebox{\textwidth}{!}{
\begin{tabular}{lccccc}
\toprule
\textbf{Model} 
& \multicolumn{1}{c}{\textbf{MNIST}}
& \multicolumn{1}{c}{\textbf{CIFAR10}}
& \multicolumn{1}{c}{\textbf{PATTERN}}
& \multicolumn{1}{c}{\textbf{CLUSTER}}
& \multicolumn{1}{c}{\textbf{ZINC}}\\
\cmidrule(lr){2-6}
& \textbf{Accuracy} $\uparrow$
& \textbf{Accuracy} $\uparrow$
& \textbf{Accuracy} $\uparrow$
& \textbf{Accuracy} $\uparrow$
& \textbf{MAE} $\downarrow$\\
\midrule

GPS
& 98.082 $\pm$ 0.114
& 72.356 $\pm$ 0.323
& 86.648 $\pm$ 0.065 
& 77.780 $\pm$ 0.236 
& 0.072 $\pm$ 0.004\\

GPS+Diffusion
& 98.081 $\pm$ 0.241 
& 71.931 $\pm$ 0.347  
& 86.741 $\pm$ 0.158 
& 78.312 $\pm$ 0.251 
& 0.073 $\pm$ 0.007 \\

GPS+Consistency
& 98.114 $\pm$ 0.183
& 71.830 $\pm$ 0.218 
& 86.695 $\pm$ 0.073 
& 78.171 $\pm$ 0.231 
& 0.069 $\pm$ 0.001\\

GPS+Consistency+Contrastive
& 98.118 $\pm$ 0.056 
& 71.976 $\pm$ 0.321 
& 86.676 $\pm$ 0.071 
& 78.232 $\pm$ 0.159
& \textbf{0.068 $\pm$ 0.003}\\


GPS+GCCM (ours)
& \textbf{98.236 $\pm$ 0.060}
& \textbf{72.502 $\pm$ 0.287}
&\textbf{ 86.772 $\pm$ 0.042}
& \textbf{78.820 $\pm$ 0.187}
& 0.071 $\pm$ 0.001\\

\bottomrule
\end{tabular}
}
\vspace{-0.4cm}
\end{table*}

\noindent\textbf{Incorporating consistency training.}\quad
Building on the variant of GPS+Diffusion, we further incorporate vanilla consistency training to enable stable sampling and one-step inference, yielding a consistency-based variant denoted as GPS+Consistency.
As shown in Table~\ref{tab:ab_results}, GPS+Consistency yields more stable performance compared to GPS+Diffusion in terms of the standard deviation. 
However, it leads to reduced predictive performance, this unsatisfactory performance is mainly due to the fact that vanilla consistency training leads the model toward the shortcut solution and collapses it into a deterministic predictor.
To better illustrate this, we visualize the relative contribution of $W_y\mathbf{Y}_t + W_t\mathbf{t}_{\mathrm{emb}}(t)$ to $\mathbf{H}^{(0)}_t$ in Eq.~\eqref{eq:additive_fusion} on two representative datasets, MNIST and CLUSTER.
As shown in Fig.~\ref{fig:visualization_ablation_studies}, the heatmaps corresponding to case (a) and (d) on both datasets are dominated by dark purple colors, indicating a consistently low contribution from the terms associated with $(\mathbf{Y}_t,t)$ by down-weighting $W_y$ and $W_t$ in Eq.~\eqref{eq:additive_fusion}.
These observations are consistent with our analysis in Section~\ref{sub:prediction_as_generation}: the model might satisfy the self-consistency constraint by reducing its reliance on the noisy labels $\mathbf{Y}_t$ and the corresponding timestep $t$ (i.e., the shortcut), and thus collapse toward a deterministic predictor whose performance is close to GPS.

\begin{figure}[tb]
    \centering
    \resizebox{0.55\textwidth}{!}{
    \begin{minipage}{\textwidth}
        \begin{subfigure}[b]{0.32\linewidth}
            \centering
            \includegraphics[width=\linewidth]{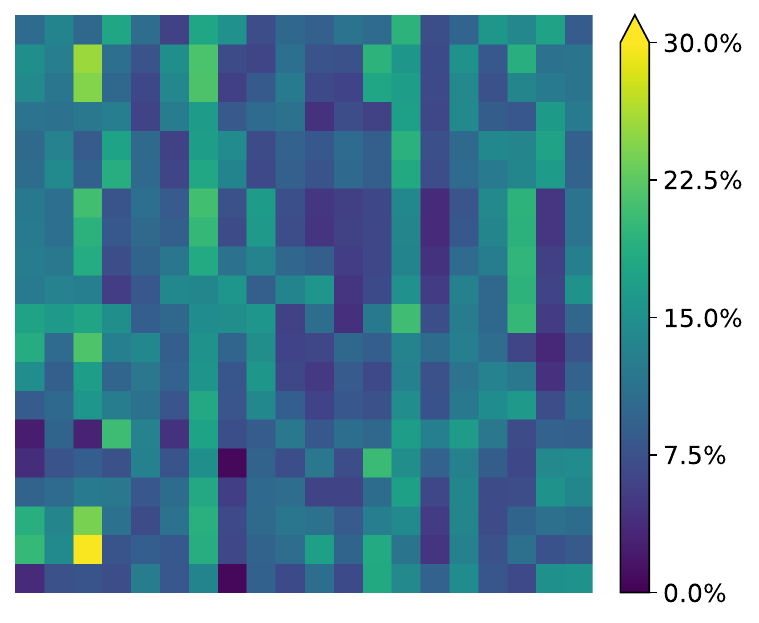}
            \caption{MNIST: GPS+Cons.}
        \end{subfigure}
        \hfill
        \begin{subfigure}[b]{0.32\linewidth}
            \centering
            \includegraphics[width=\linewidth]{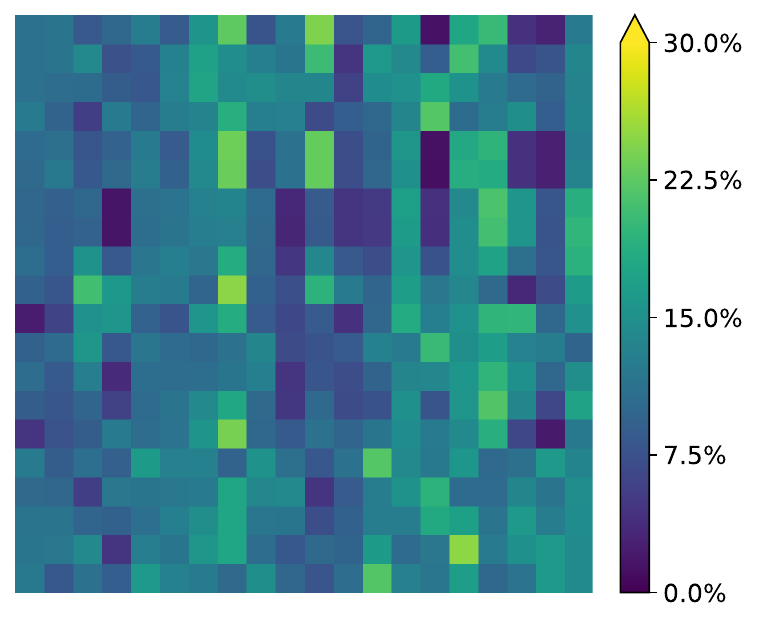}
            \caption{MNIST: GPS+Cons.+Contr.}
        \end{subfigure}
        \hfill
        \begin{subfigure}[b]{0.32\linewidth}
            \centering
            \includegraphics[width=\linewidth]{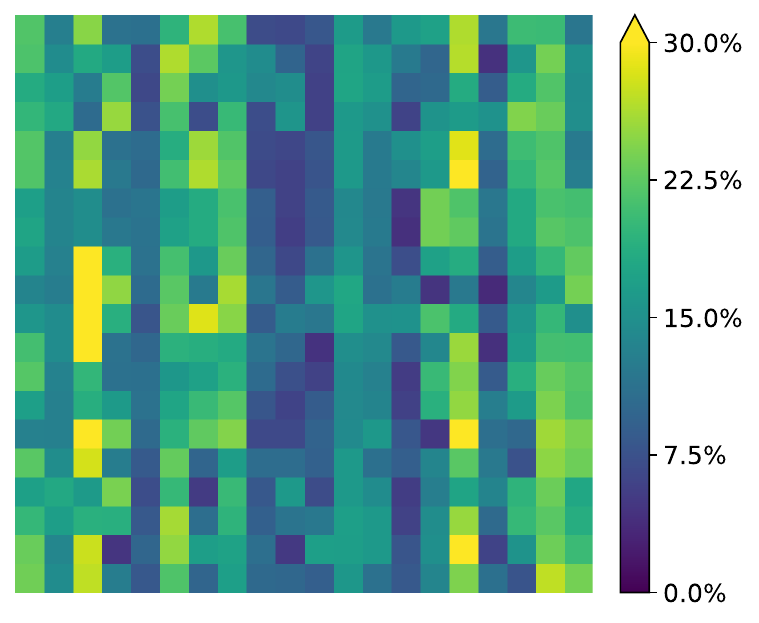}
            \caption{MNIST: GPS+GCCM}
        \end{subfigure}

        \vspace{2mm}

        \begin{subfigure}[b]{0.32\linewidth}
            \centering
            \includegraphics[width=\linewidth]{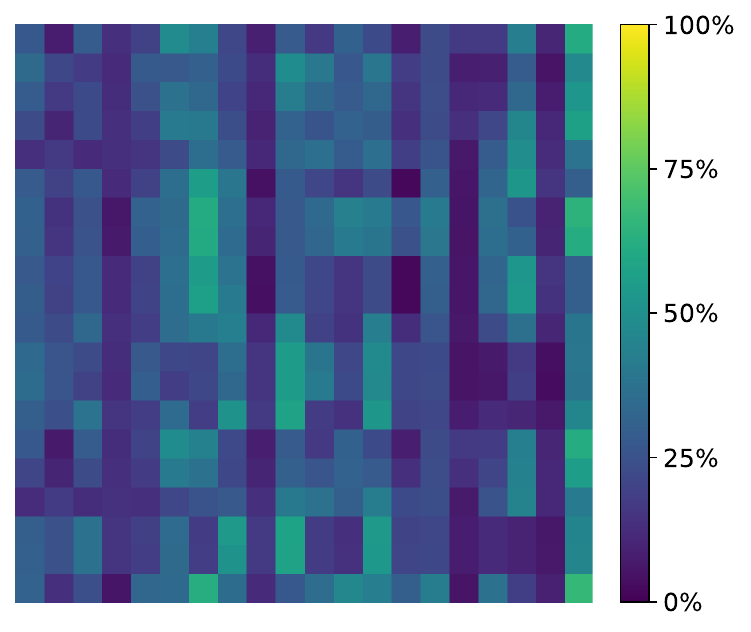}
            \caption{CLUSTER: GPS+Cons.}
        \end{subfigure}
        \hfill
        \begin{subfigure}[b]{0.32\linewidth}
            \centering
            \includegraphics[width=\linewidth]{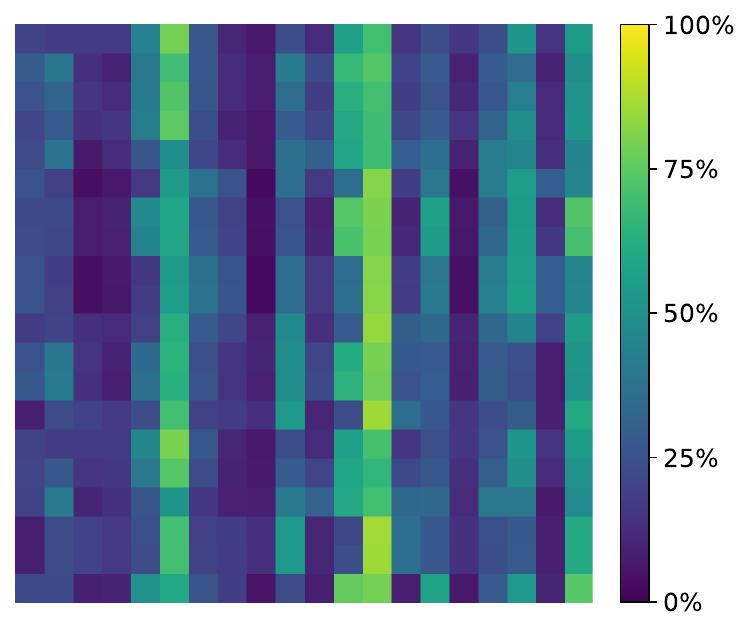}
            \caption{CLUSTER: GPS+Cons.+Contr.}
        \end{subfigure}
        \hfill
        \begin{subfigure}[b]{0.32\linewidth}
            \centering
            \includegraphics[width=\linewidth]{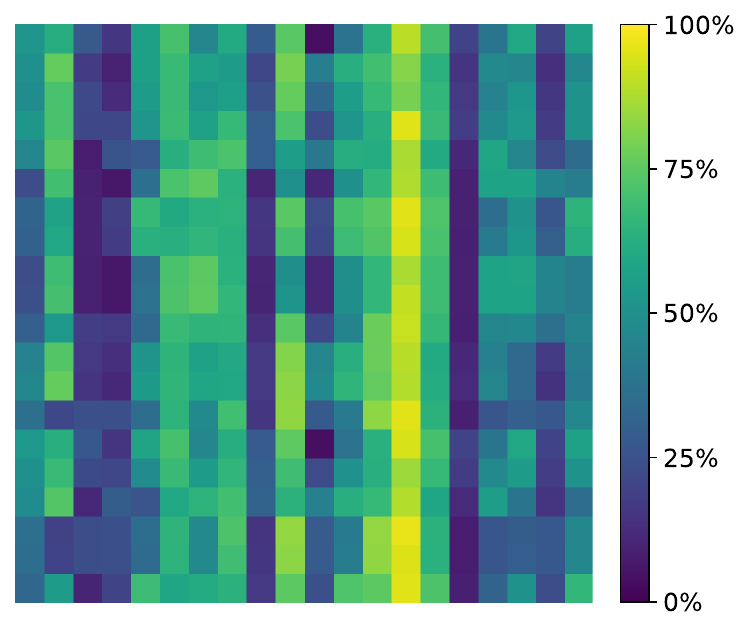}
            \caption{CLUSTER: GPS+GCCM}
        \end{subfigure}
    \end{minipage}
    }

    \vspace{-0.2cm}
    \caption{
     Heatmap visualizations of the contribution terms of $W_y\mathbf{Y}_t$ and $W_t \mathbf{t}_{\mathrm{emb}}(t)$ to $\mathbf{H}_t^{(0)}$ on \textsc{MNIST} (top) and \textsc{CLUSTER} (bottom), computed as
     $\frac{\lvert W_y \mathbf{Y}_t\rvert + \lvert W_t \mathbf{t}_{\mathrm{emb}}(t)\rvert}
     {\lvert W_y \mathbf{Y}_t\rvert + \lvert W_t \mathbf{t}_{\mathrm{emb}}(t)\rvert + \lvert W_x \mathbf{X}\rvert}$.
     The x-axis corresponds to node indices and the y-axis corresponds to hidden dimensions.
    }
    \label{fig:visualization_ablation_studies}
    \vspace{-0.5cm}
\end{figure}

\noindent\textbf{Mitigating the shortcut solution via the contrastive consistency objective.}\quad
To evaluate the effectiveness of the proposed contrastive consistency objective in mitigating the shortcut solution, we replace the self-consistency objective in GPS+Consistency with our contrastive consistency objective, yielding the variant GPS+Consistency+Contrastive.
From Table~\ref{tab:ab_results}, it is clear to see that GPS+Consistency+Contrastive demonstrates improvements over GPS+Consistency, but only to a limited extent.
Specifically, as illustrated in Fig.~\ref{fig:visualization_ablation_studies}, the corresponding heatmap (e) on CLUSTER shows only a mild mitigation effect, while the heatmap (b) on MNIST remains largely unchanged, this observation suggests that the contribution of $(\mathbf{Y}_t,t)$ is still limited.
Overall, these results indicate that although the contrastive consistency objective has the potential to alleviate the shortcut tendency toward a deterministic predictor, itself alone may not be sufficient to fully prevent the denoising network from down-weighting $W_y$ and $W_t$ in Eq.~\eqref{eq:additive_fusion} to reduce its reliance on $(\mathbf{Y}_t,t)$ to satisfy positive-pair agreement discussed in Section~\ref{subsec:feature_aug}, thereby necessitating additional training strategies.

\noindent\textbf{Enhancing the contrastive consistency objective with feature perturbation.}\quad
To further mitigate the shortcut solution in GPS+Consistency+Contrastive, we incorporate feature perturbation to break the identical feature inputs across different time steps, yielding the full model denoted as GPS+GCCM.
As shown in Table~\ref{tab:ab_results}, compared with GPS+Consistency+Contrastive, GPS+GCCM improves performance across all evaluated datasets.
Moreover, as illustrated in Fig.~\ref{fig:visualization_ablation_studies}, the corresponding heatmaps (c) and (f) on MNIST and CLUSTER become noticeably brighter and more evenly distributed, indicating that the denoising network balances its reliance on $(\mathbf{Y}_t,t)$ and $\mathbf{X}$, rather than satisfying the consistency constraint by down-weighting $W_y$ and $W_t$.
These results demonstrate that GCCM provides an effective generative enhancement for graph prediction by mitigating the shortcut solution in vanilla consistency training and avoiding collapse toward a deterministic predictor, thereby enabling the model to leverage the generative formulation to deliver consistent performance gains.
\vspace{-0.2cm}

\section{Conclusion}
In this paper, we show that consistency training for diffusion-based graph prediction can admit a shortcut solution: under identical conditioning, the denoising network may ignore noisy labels and collapse into a deterministic predictor.
To address this, we propose a Graph Contrastive Consistency Model (GCCM) with two training strategies: (1) a contrastive consistency objective with mini-batch negatives to enforce separation between difference instances, and (2) feature perturbation to break identical conditioning on input node/edge features across different noise levels.
Extensive experiments show that GCCM can effectively mitigates the short cut solution, and consistently outperforms existing diffusion-based graph prediction baselines across a broad range of datasets and prediction tasks.

\bibliographystyle{unsrtnat}
\bibliography{reference}

\newpage
\appendix

\section{Extended Related Works}
\label{appendix:related_works}
\noindent\textbf{Deterministic models for graph prediction.}\quad
Deterministic models for graph prediction typically learn a direct mapping from input graph data to target values in a supervised manner via Graph Neural Networks, where model parameters are optimized by minimizing a task-specific loss between predictions and ground-truth.
Graph neural networks perform prediction by iteratively aggregating information from local neighborhoods through message-passing mechanisms, and have given rise to a wide range of popular architectures, including GCN~\cite{kipf2017semisupervisedclassificationgraphconvolutional}, GIN~\cite{xu2019powerfulgraphneuralnetworks}, GatedGCN~\cite{bresson2018residualgatedgraphconvnets}, and PNA~\cite{corso2020principalneighbourhoodaggregationgraph}.
Later studies further improve the expressive power of GNNs by moving beyond standard message passing, leading to models such as CIN~\cite{bodnar2022weisfeilerlehmancellularcw}, CRaWI~\cite{tönshoff2023walkingweisfeilerlemanhierarchy}, and GIN-AK+~\cite{zhao2022starssubgraphsupliftinggnn}.

More recently, motivated by the strong global modeling capability of Transformers, an increasing number of works have extended Transformer-based architectures to graph prediction.
Spectral Attention Networks (SAN)~\cite{kreuzer2021rethinkinggraphtransformersspectral} employ a dual-attention design that combines global attention with edge-restricted attention, while incorporating Laplacian positional encodings to inject spectral structural information into node representations.
Graphormer~\cite{ying2021transformersreallyperformbad} adopts global attention and augments it with structural biases such as centrality and spatial encodings.
More recently, GraphGPS~\cite{rampášek2023recipegeneralpowerfulscalable} provides a general framework that integrates positional and structural encodings alongside local message passing and global attention, resulting in competitive performance across a wide range of prediction tasks on graphs.

\noindent\textbf{Generative diffusion models for graph prediction.}\quad
In recent years, diffusion models~\cite{ho2020denoisingdiffusionprobabilisticmodels,song2022denoisingdiffusionimplicitmodels,nichol2021improveddenoisingdiffusionprobabilistic,rombach2022highresolutionimagesynthesislatent} have emerged as a powerful framework for generative modeling.
They define a forward process that gradually perturbs clean data with noise, and train neural networks to reverse this process by progressively denoising.
At inference time, new samples are generated by iteratively transforming noise drawn from a predefined prior into data samples.
Motivated by their strong generative capacity, a growing body of work has extended diffusion models to graph-related generation tasks, adapting the diffusion process to structured graph data.
Representative applications include graph edit distance (GED) computation~\cite{huang2025diffgedcomputinggraphedit,huang2025unsupervisedtrainingmatchingbasedgraph}, combinatorial optimization~\cite{sun2023difuscographbaseddiffusionsolvers}, and graph generation~\cite{vignac2023digressdiscretedenoisingdiffusion,zhou2024unifyinggenerationpredictiongraphs}.
Beyond graph generation tasks, LGD~\cite{zhou2024unifyinggenerationpredictiongraphs} explores diffusion models for graph prediction by applying diffusion in a latent space to generate latent representations and then mapping them to the target space using a learned decoder.
Although effective, it still incurs substantial computational overhead due to iterative denoising at inference time.
Building upon diffusion models, consistency models~\cite{song2023consistencymodels,song2024improved} eliminate iterative reverse diffusion by directly learning a mapping from noisy samples to clean data through the self-consistency objective, enabling efficient one-step or few-step generation.
Inspired by the consistency mapping principle, PCL~\cite{li2025generative} further develops a predictive consistency framework for prediction tasks by enforcing consistent predictions across different noise levels.
This formulation enables stable sampling and one-step inference in diffusion-based graph prediction models.

\section{Further Details of the Proposed GCCM}

\subsection{Training Strategy}

\begin{algorithm}[htbp]
\caption{Training Process of GCCM}
\label{alg_GCCM_training_simple}
\begin{algorithmic}[1]
\STATE {\bfseries Input:} Dataset $\mathcal{D}$; forward process $q(\cdot)$; feature perturbation $\mathrm{Perturb}(\cdot)$;
losses $d^b(\cdot,\cdot)$ and $d^c(\cdot,\cdot)$; weights $\lambda_1,\lambda_2$
\STATE Initialize model parameters $\theta$ and optimizer
\WHILE{not converged}
\STATE Sample $(\mathcal{G} = (\mathbf{X},\mathbf{A}), \mathbf{Y}) \sim \mathcal{D}$
\STATE Sample $t_1 \sim \mathcal{U}\{1,\dots,T\}$ and set $t_2 \leftarrow \max(1,\lfloor 0.2 t_1 \rfloor)$
\STATE Sample $\mathbf{Y}_{t_1} \sim q(\mathbf{Y}_{t_1}\mid \mathbf{Y})$ and $\mathbf{Y}_{t_2} \sim q(\mathbf{Y}_{t_2}\mid \mathbf{Y})$
\STATE Sample $\tilde{\mathbf{X}}_1 \sim \mathrm{Perturb}(\mathbf{X})$ and $\tilde{\mathbf{X}}_2 \sim \mathrm{Perturb}(\mathbf{X})$
\STATE Set $\tilde{\mathcal{G}}_1 \leftarrow (\tilde{\mathbf{X}}_1,\mathbf{A})$ and $\tilde{\mathcal{G}}_2 \leftarrow (\tilde{\mathbf{X}}_2,\mathbf{A})$
\STATE $\hat{\mathbf{Y}}_0^{t_1}, \mathbf{Z}^{t_1} \leftarrow f_\theta(\mathbf{Y}_{t_1}, t_1, \tilde{\mathcal{G}}_1)$
\STATE $\hat{\mathbf{Y}}_0^{t_2}, \mathbf{Z}^{t_2} \leftarrow f_\theta(\mathbf{Y}_{t_2}, t_2, \tilde{\mathcal{G}}_2)$
\STATE $\mathcal{L} \leftarrow 
\lambda_1 \Big(
d^b(\hat{\mathbf{Y}}_0^{t_1}, \mathbf{Y})
+ d^b(\hat{\mathbf{Y}}_0^{t_2}, \mathbf{Y})
\Big)
+ \lambda_2\, d^c(\mathbf{Z}^{t_1}, \mathbf{Z}^{t_2})$
\STATE optimizer.zero\_grad()
\STATE $\mathcal{L}.\mathrm{backward}()$
\STATE optimizer.step()
\ENDWHILE
\end{algorithmic}
\end{algorithm}

The detailed training process of GCCM is presented in Algorithm~\ref{alg_GCCM_training_simple}.

\subsection{One-step Prediction}

\begin{algorithm}[htbp]
\caption{One-step Prediction in GCCM}
\label{alg_GCCM_inference}
\begin{algorithmic}[1]
\STATE {\bfseries Input:} Trained denoising network $f_\theta(\cdot)$; input graph $\mathcal{G}$
\STATE {\bfseries Output:} Predicted labels $\hat{\mathbf{Y}}_0$
\STATE Sample initial noise $\mathbf{Y}_{T}$
\STATE $\hat{\mathbf{Y}}_0 \leftarrow f_\theta(\mathbf{Y}_T, T, \mathcal{G})$
\end{algorithmic}
\end{algorithm}

The one-step prediction procedure of GCCM is summarized in Algorithm~\ref{alg_GCCM_inference}.

\section{Proofs}
\label{sec:proofs}
\subsection{Proof of Lemma~\ref{lem:mse_shortcut_family}}
\label{sec:proof_mse_shortcut}

Recall Lemma~\ref{lem:mse_shortcut_family}:

Consider the training objective
\begin{equation}
\mathcal{L}(\theta) = \mathbb{E}\left[\lambda_1\left(d^b(\hat{\mathbf{Y}}_0^{t_1}, \mathbf{Y}) + d^b(\hat{\mathbf{Y}}_0^{t_2}, \mathbf{Y})\right) + \lambda_2\, d^s(\hat{\mathbf{Y}}_0^{t_1}, \hat{\mathbf{Y}}_0^{t_2})\right],
\end{equation}
where $d^b(\cdot, \cdot)$ denotes a task-specific supervised loss and $d^s(\hat{\mathbf{Y}}_0^{t_1},\hat{\mathbf{Y}}_0^{t_2}) := \|\hat{\mathbf{Y}}_0^{t_1}-\hat{\mathbf{Y}}_0^{t_2}\|_2^2$, and each predicted label $\hat{\mathbf{Y}}_0^{t}$ can be rewritten as
\begin{equation}
\begin{aligned}
\hat{\mathbf{Y}}_0^{t}
&= f_\theta\big(\mathbf{Y}_t, t, \mathcal{G}\big)
 = f_\theta\big(\mathbf{H}^{(0)}_t, \mathbf{A}\big) \\
&= f_\theta\Big(
W_y\,\mathbf{Y}_t
+ W_t\,\mathbf{t}_{\mathrm{emb}}(t)
+ W_x\,\mathbf{X},
\mathbf{A}
\Big).
\end{aligned}
\end{equation}
Minimizing the above objective could lead the denoising network $f_\theta$ to a shortcut solution set
\begin{equation}
\mathcal{F}_{\mathrm{sc}} := \{\theta:\; W_y=\mathbf{0}_{d_k\times d_h},\, W_t=\mathbf{0}_{d_t\times d_h}\},
\end{equation}
where $d_k$ denotes the dimension of the input label, such that the noisy labels $\mathbf{Y}_t$ and timestep $t$ are ignored.

\begin{proof}
The training objective is
\begin{equation}
\mathcal{L}(\theta) = \mathbb{E}\left[\lambda_1\left(d^b(\hat{\mathbf{Y}}_0^{t_1}, \mathbf{Y}) + d^b(\hat{\mathbf{Y}}_0^{t_2}, \mathbf{Y})\right) + \lambda_2\, d^s(\hat{\mathbf{Y}}_0^{t_1}, \hat{\mathbf{Y}}_0^{t_2})\right],
\end{equation}
where $d^b(\cdot, \cdot)$ denotes a task-specific supervised loss and $d^s(\hat{\mathbf{Y}}_0^{t_1},\hat{\mathbf{Y}}_0^{t_2}) := \|\hat{\mathbf{Y}}_0^{t_1}-\hat{\mathbf{Y}}_0^{t_2}\|_2^2$.
Under the additive fusion in Eq.~\eqref{eq:additive_fusion}, the predictions can be expressed as
\begin{equation}
\hat{\mathbf{Y}}_0^{t}
\;=\;
f_\theta\big(
W_y\,\mathbf{Y}_t
+ W_t\,\mathbf{t}_{\mathrm{emb}}(t)
+ W_x\,\mathbf{X},
\mathbf{A}
\big).
\end{equation}
Accordingly, the training objective can be written as
\begin{equation}
\begin{aligned}
\mathcal{L}(\theta) = \mathbb{E}\Big[
&\lambda_1\Big(
d^b\big(f_\theta(W_y\,\mathbf{Y}_{t_1} + W_t\,\mathbf{t}_{\mathrm{emb}}(t_1) + W_x\,\mathbf{X}, \mathbf{A}),\, \mathbf{Y}\big) \\
+\; &d^b\big(f_\theta(W_y\,\mathbf{Y}_{t_2} + W_t\,\mathbf{t}_{\mathrm{emb}}(t_2) + W_x\,\mathbf{X}, \mathbf{A}),\, \mathbf{Y}\big)
\Big) \\
+\; &\lambda_2\,
\Big\|
f_\theta\big(W_y\,\mathbf{Y}_{t_1} + W_t\,\mathbf{t}_{\mathrm{emb}}(t_1) + W_x\,\mathbf{X}, \mathbf{A}\big) \\
-\; &f_\theta\big(W_y\,\mathbf{Y}_{t_2} + W_t\,\mathbf{t}_{\mathrm{emb}}(t_2) + W_x\,\mathbf{X}, \mathbf{A}\big)
\Big\|_2^2
\Big].
\end{aligned}
\end{equation}
Minimizing the training objective requires both terms to be minimized jointly. 

The boundary condition objective is minimized when
\begin{equation}
\begin{aligned}
f_\theta\big(W_y\,\mathbf{Y}_{t_1} + W_t\,\mathbf{t}_{\mathrm{emb}}(t_1) + W_x\,\mathbf{X}, \mathbf{A}\big) &= \mathbf{Y}, \\
f_\theta\big(W_y\,\mathbf{Y}_{t_2} + W_t\,\mathbf{t}_{\mathrm{emb}}(t_2) + W_x\,\mathbf{X}, \mathbf{A}\big) &= \mathbf{Y}.
\end{aligned}
\label{eq:boundary_min}
\end{equation}

The self-consistency objective is minimized when
\begin{equation}
f_\theta\big(W_y\,\mathbf{Y}_{t_1} + W_t\,\mathbf{t}_{\mathrm{emb}}(t_1) + W_x\,\mathbf{X}, \mathbf{A}\big)
=
f_\theta\big(W_y\,\mathbf{Y}_{t_2} + W_t\,\mathbf{t}_{\mathrm{emb}}(t_2) + W_x\,\mathbf{X}, \mathbf{A}\big).
\label{eq:consistency_min}
\end{equation}
Combining Eq.~\eqref{eq:boundary_min} and Eq.~\eqref{eq:consistency_min}, the joint minimization requires
\begin{equation}
f_\theta\big(W_y\,\mathbf{Y}_{t_1} + W_t\,\mathbf{t}_{\mathrm{emb}}(t_1) + W_x\,\mathbf{X}, \mathbf{A}\big)
=
f_\theta\big(W_y\,\mathbf{Y}_{t_2} + W_t\,\mathbf{t}_{\mathrm{emb}}(t_2) + W_x\,\mathbf{X}, \mathbf{A}\big)
= \mathbf{Y}.
\label{eq:joint_min}
\end{equation}

This requires $f_\theta$ to map different inputs to the same output $\mathbf{Y}$ for arbitrary $(\mathbf{Y}_{t_1}, t_1)$ and $(\mathbf{Y}_{t_2}, t_2)$. Since the two predictions share the same $\mathbf{A}$, a sufficient way to satisfy Eq.~\eqref{eq:joint_min} is to make the input to $f_\theta$ independent of $(\mathbf{Y}_t, t)$ by enforcing
\begin{equation}
W_y\,\mathbf{Y}_{t_1} + W_t\,\mathbf{t}_{\mathrm{emb}}(t_1) + W_x\,\mathbf{X}
=
W_y\,\mathbf{Y}_{t_2} + W_t\,\mathbf{t}_{\mathrm{emb}}(t_2) + W_x\,\mathbf{X}.
\label{eq:input_equal}
\end{equation}
In general, $\mathbf{Y}_{t_1}\neq \mathbf{Y}_{t_2}$ and $\mathbf{t}_{\mathrm{emb}}(t_1)\neq \mathbf{t}_{\mathrm{emb}}(t_2)$. A sufficient solution to satisfy Eq.~\eqref{eq:input_equal} for arbitrary $(\mathbf{Y}_{t_1},t_1)$ and $(\mathbf{Y}_{t_2},t_2)$ is
\begin{equation}
W_y=\mathbf{0}, \qquad W_t=\mathbf{0}.
\end{equation}
We denote the corresponding shortcut solution set $\mathcal{F}_{\mathrm{sc}} := \{\theta:\; W_y=\mathbf{0},\, W_t=\mathbf{0}\}$. Under $\mathcal{F}_{\mathrm{sc}}$, the joint minimization condition in Eq.~\eqref{eq:joint_min} reduces to $f_\theta(W_x\,\mathbf{X}, \mathbf{A}) = \mathbf{Y}$, which is exactly the supervised objective of a standard deterministic predictor. Therefore, the full training objective under $\mathcal{F}_{\mathrm{sc}}$ reduces to
\begin{equation}
\mathcal{L}(\theta) = 2\lambda_1\, \mathbb{E}\left[d^b(f_\theta(W_x\,\mathbf{X}, \mathbf{A}), \mathbf{Y})\right],
\end{equation}
and the model can minimize this objective in the same way as a deterministic predictor.
\end{proof}

\subsection{Proof of Lemma~\ref{lem:cl_shortcut_not_sufficient}}
\label{sec:proof_not_sufficient}
Recall Lemma~\ref{lem:cl_shortcut_not_sufficient}:

Consider the contrastive consistency objective defined as
\begin{equation}
d^c(\mathbf{Z}^{t_1}, \mathbf{Z}^{t_2})
=
\frac{1}{2}\Big(
\mathcal{L}_{\mathrm{CL}}^{t_1\to t_2}
+
\mathcal{L}_{\mathrm{CL}}^{t_2\to t_1}
\Big).
\end{equation}
where the $t_1\to t_2$ term of the contrastive consistency objective is defined as
\begin{equation}
\begin{aligned}
\mathcal{L}_{\mathrm{CL}}^{t_1\to t_2}
&=
-\frac{1}{N}
\sum_{i=1}^{N}
\log
\frac{
\exp\left(\operatorname{sim}\left(\mathbf{z}_i^{t_1},\mathbf{z}_i^{t_2}\right)/\tau\right)
}{
\sum_{j=1}^{N}
\exp\left(\operatorname{sim}\left(\mathbf{z}_i^{t_1},\mathbf{z}_j^{t_2}\right)/\tau\right)
}, \\
&\text{with }\;
\operatorname{sim}(\mathbf{z}_{i}^{t_1},\mathbf{z}_{j}^{t_2})
=
\frac{(\mathbf{z}_{i}^{t_1})^{T}\mathbf{z}_{j}^{t_2}}
{\|\mathbf{z}_{i}^{t_1}\|\,\|\mathbf{z}_{j}^{t_2}\|}.
\end{aligned}
\end{equation}

Under the shortcut solution set $\mathcal{F}_{\mathrm{sc}}$ defined in Eq.~\eqref{eq:shortcut_family},
$$
\mathcal{F}_{\mathrm{sc}} := \{\theta:\; W_y=\mathbf{0}_{d_k\times d_h},\, W_t=\mathbf{0}_{d_t\times d_h}\}.
$$
$\mathcal{F}_{\mathrm{sc}}$ is generally no longer sufficient to minimize the contrastive consistency objective.

\begin{proof}
\label{sec:cl_shortcut_not_sufficient}
Under the shortcut solution set defined in Eq.~\eqref{eq:shortcut_family},
\begin{equation}
\mathcal{F}_{\mathrm{sc}} := \{\theta:\; W_y=\mathbf{0},\, W_t=\mathbf{0}\},
\end{equation}
the additive fusion in Eq.~\eqref{eq:additive_fusion},
\begin{equation}
\mathbf{H}_t^{(0)} = 
W_y\,\mathbf{Y}_t
+ W_t\,\mathbf{t}_{\mathrm{emb}}(t)
+ W_x\,\mathbf{X},
\end{equation}
reduces to $\mathbf{H}_t^{(0)}=W_x\mathbf{X}$.
Therefore, the output latent representation matrix $\mathbf{Z}^{t}$ becomes independent of $(\mathbf{Y}_t,t)$:
\begin{equation}
\begin{aligned}
\mathbf{Z}^{t}
&= f_\theta\big(\mathbf{Y}_t, t, \mathcal{G}\big)
 = f_\theta\big(\mathbf{H}^{(0)}_t,\mathbf{A}\big) \\
&= \mathrm{Proj}_{c}\left(\mathrm{Enc}\left(\mathbf{H}^{(0)}_t, \mathbf{A}\right)\right) \\
&= \mathrm{Proj}_{c}\left(\mathrm{Enc}\left(W_x\mathbf{X}, \mathbf{A}\right)\right).
\end{aligned}
\label{eq:Z_independent_of_Yt_t}
\end{equation}
In particular, for two time steps $t_1$ and $t_2$ with the same input graph $\mathcal{G}=(\mathbf{X},\mathbf{A})$, we have
\begin{equation}
\mathbf{Z}^{t_1}
=
\mathrm{Proj}_{c}\left(\mathrm{Enc}\left(W_x\mathbf{X}, \mathbf{A}\right)\right)
=
\mathbf{Z}^{t_2}.
\label{eq:Z_t1_equals_Z_t2}
\end{equation}
Let $\mathbf{z}_i^{t}$ denote the $i$-th row of $\mathbf{Z}^{t}$.
Eq.~\eqref{eq:Z_t1_equals_Z_t2} implies that, for each instance $i$,
\begin{equation}
\mathbf{z}_i^{t_1} = \mathbf{z}_i^{t_2}.
\label{eq:zi_t1_equals_zi_t2}
\end{equation}

In the following, we consider the $t_1\to t_2$ term of the contrastive consistency objective; the $t_2\to t_1$ term is defined analogously.
Recall that
\begin{equation}
\mathcal{L}_{\mathrm{CL}}^{t_1\to t_2}
=
-\frac{1}{N}
\sum_{i=1}^{N}
\log
\frac{
\exp\left(\operatorname{sim}\left(\mathbf{z}_i^{t_1},\mathbf{z}_i^{t_2}\right)/\tau\right)
}{
\sum_{j=1}^{N}
\exp\left(\operatorname{sim}\left(\mathbf{z}_i^{t_1},\mathbf{z}_j^{t_2}\right)/\tau\right)
}.
\end{equation}
From Eq.~\eqref{eq:zi_t1_equals_zi_t2}, for each instance $i$ we have
\begin{equation}
\operatorname{sim}\left(\mathbf{z}_i^{t_1},\mathbf{z}_i^{t_2}\right)=1.
\end{equation}
Substituting this into the above expression yields
\begin{equation}
\mathcal{L}_{\mathrm{CL}}^{t_1\to t_2}
=
-\frac{1}{N}
\sum_{i=1}^{N}
\log
\frac{
\exp(1/\tau)
}{
\exp(1/\tau)
+
\sum_{j\neq i}
\exp\left(\operatorname{sim}\left(\mathbf{z}_i,\mathbf{z}_j\right)/\tau\right)
}.
\label{eq:cl_loss_simplified}
\end{equation}
In this form, the contribution of the positive pairs becomes constant.
Consequently, minimizing the contrastive consistency objective reduces to separating negative pairs across different instances.

We now analyze the negative-pair terms in the contrastive consistency objective.
For each anchor instance $i$, define the corresponding contrastive consistency loss in Eq.~\eqref{eq:cl_loss_simplified} as
\begin{equation}
\ell_i
=
-\log
\frac{
\exp(1/\tau)
}{
\exp(1/\tau)
+
\sum_{j\neq i}
\exp\left(\operatorname{sim}\left(\mathbf{z}_i,\mathbf{z}_j\right)/\tau\right)
}.
\label{eq:li_def}
\end{equation}
Since the numerator is fixed, minimizing $\ell_i$ requires the denominator to be as small as possible, which in turn encourages
$\operatorname{sim}(\mathbf{z}_i,\mathbf{z}_j)$ to be small for all $j\neq i$.

However, consider a particular parameter choice within $\mathcal{F}_{\mathrm{sc}}$ by additionally setting $W_x=\mathbf{0}$.
Then $\mathbf{H}^{(0)}_{t_1}=\mathbf{H}^{(0)}_{t_2}$, and the resulting latent representations collapse to the same vector for all instances, i.e.,
\begin{equation}
\mathbf{z}_1=\mathbf{z}_2=\cdots=\mathbf{z}_N.
\end{equation}
Consequently, $\operatorname{sim}(\mathbf{z}_i,\mathbf{z}_j)=1$ for all $i\neq j$, and Eq.~\eqref{eq:li_def} yields
\begin{equation}
\ell_i
=
-\log\frac{\exp(1/\tau)}{N\exp(1/\tau)}
=
\log N
>
0.
\end{equation}
Therefore, using only $W_y=\mathbf{0}$ and $W_t=\mathbf{0}$ cannot in general minimize the contrastive consistency objective, which completes the proof.
\end{proof}

\subsection{Proof of Lemma~\ref{lem:xaug_breaks_shortcut}}
\label{sec:xaug_breaks_shortcut}
Recall Lemma~~\ref{lem:xaug_breaks_shortcut}:

Let $t_1$ and $t_2$ be two noise levels sampled in consistency training.
Given two inputs $(\mathbf{Y}_{t_1}, t_1, \tilde{\mathcal{G}}_1)$ and $(\mathbf{Y}_{t_2}, t_2, \tilde{\mathcal{G}}_2)$, where
$\tilde{\mathcal{G}}_1=(\tilde{\mathbf{X}}_1,\mathbf{A})$ and
$\tilde{\mathcal{G}}_2=(\tilde{\mathbf{X}}_2,\mathbf{A})$
are independently perturbed inputs with $\tilde{\mathbf{X}}_1\neq \tilde{\mathbf{X}}_2$,
consider the shortcut solution set $\mathcal{F}_{\mathrm{sc}}$ defined in Eq.~\eqref{eq:shortcut_family},
\[
\mathcal{F}_{\mathrm{sc}} := \{\theta:\; W_y=\mathbf{0}_{d_k\times d_h},\, W_t=\mathbf{0}_{d_t\times d_h}\}.
\]
For any $\theta\in\mathcal{F}_{\mathrm{sc}}$, the latent representation $\mathbf{Z}^{t}$ produced by the contrastive head at time step $t$ can be written as
\begin{equation}
\mathbf{Z}^{t}
=
\mathrm{Proj}_{c}\left(
\mathrm{Enc}\left(
W_x\tilde{\mathbf{X}},\mathbf{A}
\right)
\right).
\end{equation}
Then, $\mathcal{F}_{\mathrm{sc}}$ is in general no longer sufficient to guarantee the positive-pair agreement
$\mathbf{Z}^{t_1}=\mathbf{Z}^{t_2}$.

\begin{proof}
For any $\theta\in\mathcal{F}_{\mathrm{sc}}$, the denoising network ignores $(\mathbf{Y}_t,t)$, and thus the latent representation produced by the contrastive head at time step $t$ can be written as
\begin{equation}
\mathbf{Z}^{t}
=
\mathrm{Proj}_{c}\left(
\mathrm{Enc}\left(
W_x\mathbf{X}, \mathbf{A}
\right)
\right).
\end{equation}
Under the two perturbed conditioning graphs
$\tilde{\mathcal{G}}_1=(\tilde{\mathbf{X}}_1, \mathbf{A})$
and
$\tilde{\mathcal{G}}_2=(\tilde{\mathbf{X}}_2, \mathbf{A})$,
the corresponding latent representations become
\begin{equation}
\mathbf{Z}^{t_1}
=
\mathrm{Proj}_{c}\left(
\mathrm{Enc}\left(
W_x\tilde{\mathbf{X}}_1, \mathbf{A}
\right)
\right),
\qquad
\mathbf{Z}^{t_2}
=
\mathrm{Proj}_{c}\left(
\mathrm{Enc}\left(
W_x\tilde{\mathbf{X}}_2, \mathbf{A}
\right)
\right).
\end{equation}
Since $\tilde{\mathbf{X}}_1 \neq \tilde{\mathbf{X}}_2$, the conditioning inputs to the network are no longer identical under $\theta\in\mathcal{F}_{\mathrm{sc}}$.
As a result, the shortcut solution set can no longer guarantee $\mathbf{Z}^{t_1}=\mathbf{Z}^{t_2}$.
\end{proof}

\section{Experimental Settings}
\label{sec:add_experimental_settings}

\subsection{Datasets Descriptions}
\label{sub:datasets_descriptions}

\begin{table*}[htbp]
\centering
\caption{Dataset statistics.}
\label{tab:datasets}
\resizebox{\textwidth}{!}{
\begin{tabular}{lrrrccc}
\toprule
Dataset & \# Graphs & Avg. \# nodes & Avg. \# edges  & Prediction level & Prediction task & Metric \\
\midrule
\textbf{MNIST}    & 70{,}000 &  70.6 &  564.5   & graph          & 10-class classif. & Accuracy \\
\textbf{CIFAR10}  & 60{,}000 & 117.6 &  941.1   & graph          & 10-class classif. & Accuracy \\
\textbf{PATTERN}  & 14{,}000 & 118.9 & 3{,}039.3  & inductive node & binary classif.   & Accuracy \\
\textbf{CLUSTER}  & 12{,}000 & 117.2 & 2{,}150.9  & inductive node & 6-class classif.  & Accuracy \\
\textbf{ZINC}     & 12{,}000 &  23.2 &   24.9    & graph          & regression        & Mean Abs.\ Error \\
\midrule
\textbf{PascalVOC-SP}  & 11{,}355 & 479.4 & 2{,}710.5  & inductive node & 21-class classif.  & F1 score \\
\textbf{COCO-SP}  & 123{,}286 & 476.9 & 2{,}693.7  & inductive node & 81-class classif.  & F1 score \\
\bottomrule
\end{tabular}
}
\end{table*}

\noindent\textbf{ZINC~\cite{dwivedi2022benchmarkinggraphneuralnetworks}.}\quad
ZINC contains 12K molecular graphs from the ZINC database.
Nodes represent heavy atoms (28 atom types) and edges represent chemical bonds (3 bond types).
The task is graph-level regression of constrained solubility (logP), evaluated by MAE.
The predefined 10K/1K/1K train/validation/test split is used.

\noindent\textbf{MNIST-SP and CIFAR10-SP~\cite{dwivedi2022benchmarkinggraphneuralnetworks}.}\quad
MNIST-SP and CIFAR10-SP are graph equivalents of the MNIST and CIFAR10 image datasets, constructed from SLIC superpixels.
Each image is converted into a graph by connecting superpixel nodes via an 8-nearest-neighbor graph.
Both are 10-class graph-level classification tasks evaluated by accuracy,
using the standard splits (MNIST: 55K/5K/10K; CIFAR10: 45K/5K/10K for train/validation/test graphs).

\noindent\textbf{PATTERN and CLUSTER~\cite{dwivedi2022benchmarkinggraphneuralnetworks}.}\quad
PATTERN and CLUSTER are synthetic datasets sampled from a Stochastic Block Model (SBM) and are designed for inductive node classification.
PATTERN requires identifying nodes that belong to one of 100 predefined subgraph patterns.
CLUSTER requires classifying nodes into 6 SBM-generated clusters (cluster IDs).
Accuracy is reported following the standard benchmark splits.

\noindent\textbf{PascalVOC-SP and COCO-SP~\cite{dwivedi2023longrangegraphbenchmark}.}\quad
PascalVOC-SP and COCO-SP are superpixel-graph datasets derived from Pascal VOC and MS COCO images via SLIC superpixelization.
Both are inductive node classification tasks that assign each superpixel node to an object category (semantic segmentation at the superpixel level),
evaluated by accuracy using the official benchmark splits.

\subsection{Implementation Details}

\begin{table*}[htbp]
\centering
\caption{GCCM hyperparameters for seven datasets from~\cite{dwivedi2022benchmarkinggraphneuralnetworks} and~\cite{dwivedi2023longrangegraphbenchmark}.}
\label{tab:gccm_hyperparameters}
\resizebox{\textwidth}{!}{
\begin{tabular}{lccccccc}
\toprule
Hyperparameter & \textbf{MNIST} & \textbf{CIFAR10} & \textbf{PATTERN}  & \textbf{CLUSTER} & \textbf{ZINC} & \textbf{PascalVOC-SP} & \textbf{COCO-SP} \\
\midrule
\# GPS Layer  & 3 & 3 & 6 & 16 & 10 & 4 & 4 \\
\# Epochs     & 100 & 100 & 100 & 100 & 2000 & 300 & 300 \\
Hidden dim $d_h$   & 52 & 52 & 64 & 48 & 64 & 96 & 96 \\
Batch size   & 16 & 16 & 32 & 16 & 32 & 32 & 32 \\
Total diffusion steps $T$  & 1000 & 1000 & 1000 & 1000 & 1000 & 1000 & 1000 \\
Time-decay ratio $\alpha$  & 0.2 & 0.2 & 0.2 & 0.2 & 0.2 & 0.2 & 0.2 \\
Boundary weight $\lambda_1$   & 1.0 & 1.0 & 1.0 & 1.0 & 1.0 & 1.0 & 1.0 \\
Consistency weight $\lambda_2$   & 0.1 & 0.1 & 0.1 & 0.1 & 0.3 & 0.1 & 0.1 \\
Latent dim $d_z$      & 52 & 52 & 64 & 48 & 64 & 96 & 96 \\
Temperature parameter $\tau$    & 1.0 & 0.5 & 0.4 & 0.5 & 0.3 & 0.3 & 0.3 \\
Perturbation type   & Continuous & Continuous & Discrete & Discrete & Discrete & Continuous & Continuous \\
Total perturbation time step $T_{per}$ & 10 & 10 & 50 & 50 & 10 & 10 & 10 \\
\bottomrule
\end{tabular}
}
\end{table*}

In this section, we summarize the GCCM-specific hyperparameters used across all datasets in Table~\ref{tab:gccm_hyperparameters}.
We adopt GPS~\cite{rampášek2023recipegeneralpowerfulscalable} as the denoising backbone; therefore, the denoising network architecture in GCCM follows the standard GPS configuration and is not repeated here.

For all datasets, we set the total number of diffusion steps to $T=1000$.
During the forward process on node/graph labels, we sample $t_1$ uniformly at random for each graph in a mini-batch from $\{1,\ldots,T\}$, and set $t_2=\lfloor \alpha t_1 \rfloor$ with a fixed time-decay ratio $\alpha=0.2$.
For consistency training, we set $\lambda_1=1.0$ and $\lambda_2=0.1$ in Eq.~\eqref{eq:gccm_objective} to balance the contributions of different objective terms.
For a fair comparison, the same $\lambda_1$ and $\lambda_2$ are used for all consistency-based ablation variants in Section~\ref{subsec:ablation_study}.
We compute the contrastive consistency objective in a latent space by applying an additional projection head $\mathrm{Proj}_c(\cdot)$, and set the latent dimension to match the hidden dimension, i.e., $d_z=d_h$.
Feature perturbation is selected according to the input feature type of each dataset.
For both discrete and continuous features, we construct perturbations using the corresponding forward noising process described in Section~\ref{sub:Diffusion_Models}.
For discrete features, we set $T_{\mathrm{per}}=50$, while for continuous features we set $T_{\mathrm{per}}=10$ to avoid overly disrupting feature semantics.



\section{Additional Experimental Results}
\label{sec:additional_results}

\begin{table*}[htbp]
\centering
\caption{Test performance on five benchmarks from~\cite{dwivedi2022benchmarkinggraphneuralnetworks}. 
Shown is the mean $\pm$ std of five runs with different seeds.}
\label{tab:different_backbone}
\resizebox{\textwidth}{!}{
\begin{tabular}{lccccc}
\toprule
\textbf{Model} 
& \textbf{MNIST}
& \textbf{CIFAR10}
& \textbf{PATTERN}
& \textbf{CLUSTER}
& \textbf{ZINC}\\
\cmidrule(lr){2-6}
& \textbf{Accuracy} $\uparrow$
& \textbf{Accuracy} $\uparrow$
& \textbf{Accuracy} $\uparrow$
& \textbf{Accuracy} $\uparrow$
& \textbf{MAE} $\downarrow$ \\
\midrule

SAN
& \textemdash 
& \textemdash 
& 86.581 $\pm$ 0.037 
& 76.691 $\pm$ 0.650
& 0.139 $\pm$ 0.006 \\

SAN-LGD
& \textemdash 
& \textemdash 
& 80.742 $\pm$ 0.315
& 68.711 $\pm$ 0.676
& 0.236 $\pm$ 0.012 \\

SAN-PCL
& \textemdash 
& \textemdash 
& 86.768 $\pm$ 0.064
& 77.461 $\pm$ 0.427
& 0.140 $\pm$ 0.002 \\

SAN-GCCM (ours)
& \textemdash 
& \textemdash 
& \textbf{86.811 $\pm$ 0.021}
& \textbf{78.481 $\pm$ 0.176}
& \textbf{0.131 $\pm$ 0.003} \\

\midrule

GPS
& 98.082 $\pm$ 0.114
& 72.356 $\pm$ 0.323
& 86.648 $\pm$ 0.065 
& 77.780 $\pm$ 0.236 
& 0.072 $\pm$ 0.004 \\

GPS-LGD
& 95.461 $\pm$ 0.613
& 70.163 $\pm$ 0.357
& 85.662 $\pm$ 0.181 
& 76.773 $\pm$ 0.301
& 0.085 $\pm$ 0.001\\

GPS-PCL
& 98.114 $\pm$ 0.183
& 71.830 $\pm$ 0.218 
& 86.695 $\pm$ 0.073 
& 78.171 $\pm$ 0.231
& \textbf{0.069 $\pm$ 0.001} \\

GPS-GCCM (ours)
& \textbf{98.236 $\pm$ 0.060}
& \textbf{72.502 $\pm$ 0.287}
& \textbf{86.772 $\pm$ 0.042}
& \textbf{78.820 $\pm$ 0.187}
& 0.071 $\pm$ 0.001 \\

\midrule

GRIT
& 98.108 $\pm$ 0.111 
& \textbf{76.468 $\pm$ 0.881}
& 87.196 $\pm$ 0.076 
& \textbf{80.026 $\pm$ 0.277} 
& 0.059 $\pm$ 0.002 \\

GRIT-LGD
& 96.544 $\pm$ 0.615
& 71.167 $\pm$ 1.311  
& 84.600 $\pm$ 1.841
& 64.913 $\pm$ 1.639
& 0.118 $\pm$ 0.011 \\

GRIT-PCL
& 97.997 $\pm$ 0.126
& 75.720 $\pm$ 0.734  
& 86.873 $\pm$ 0.060
& 78.643 $\pm$ 0.836
& 0.063 $\pm$ 0.005 \\

GRIT-GCCM (ours)
& \textbf{98.473 $\pm$ 0.095}
& 75.905 $\pm$ 0.842
& \textbf{87.256 $\pm$ 0.084}  
& 79.816 $\pm$ 0.137  
& \textbf{0.059 $\pm$ 0.004} \\

\bottomrule
\end{tabular}
}
\vspace{-0.2cm}
\end{table*}

\subsection{Generalization Across Different Backbones}
\label{sub:cross_backbone}
The main results in Section~\ref{sec:main_results} demonstrate the effectiveness of GCCM on the GPS backbone. 
A natural question is whether the proposed framework generalizes to other graph neural network architectures.
To investigate this, we apply GCCM and the recent conditional generative baselines (i.e., LGD~\cite{zhou2024unifyinggenerationpredictiongraphs} and PCL~\cite{li2025generative}) on top of two additional Graph Transformer backbones, SAN~\cite{kreuzer2021rethinkinggraphtransformersspectral} and GRIT~\cite{10.5555/3618408.3619379}.
The results are summarized in Table~\ref{tab:different_backbone}.
As shown in the table, GCCM consistently outperforms the original backbones and the baseline generative prediction frameworks LGD and PCL across all datasets under the same denoising backbone, confirming its generalizability and effectiveness.
This consistent improvement suggests that GCCM's benefit stems from mitigating the shortcut solution in consistency training itself, rather than from interactions with any particular backbone architecture, indicating that GCCM provides a general training framework applicable to diverse graph neural networks.

\begin{table}[htbp]
\centering
\caption{Efficiency comparison of GCCM against baselines across five benchmarks from~\cite{dwivedi2022benchmarkinggraphneuralnetworks}.}
\label{tab:efficiency}
\resizebox{\textwidth}{!}{
\begin{tabular}{llccccc}
\toprule
\textbf{Metric} & \textbf{Model} & \textbf{MNIST} & \textbf{CIFAR10} & \textbf{PATTERN} & \textbf{CLUSTER} & \textbf{ZINC} \\
\midrule
\multirow{4}{*}{\shortstack[l]{Time/Epoch\\(s)}}
  & GPS            & 162.22 & 129.68 & 35.63  & 99.71  & 19.33 \\
  & GPS-Diffusion  & 171.85 & 144.79 & 54.45  & 152.66 & 20.84 \\
  & GPS-PCL        & 286.99 & 248.24 & 98.70  & 277.24 & 45.70 \\
  & GPS-GCCM (ours)& 302.71 & 243.92 & 97.72  & 276.06 & 46.67 \\
\midrule
\multirow{4}{*}{\shortstack[l]{GPU Memory\\(MiB)}}
  & GPS            & 418    & 506    & 14{,}490 & 9{,}878  & 408 \\
  & GPS-Diffusion  & 462    & 532    & 14{,}640 & 10{,}788 & 418 \\
  & GPS-PCL        & 538    & 696    & 20{,}454 & 15{,}862 & 484 \\
  & GPS-GCCM (ours)& 584    & 700    & 20{,}580 & 16{,}224 & 486 \\
\midrule
\multirow{4}{*}{\shortstack[l]{Inference Time\\(s)}}
  & GPS            & 6.21   & 5.36   & 2.37     & 2.91     & 0.72 \\
  & GPS-Diffusion  & 5{,}756.25 & 5{,}087.50 & 1{,}868.58 & 2{,}398.31 & 492.65 \\
  & GPS-PCL        & 6.32   & 6.35   & 2.81     & 3.43     & 0.83 \\
  & GPS-GCCM (ours)& 6.69   & 6.41   & 3.01     & 3.28     & 0.84 \\
\bottomrule
\end{tabular}
}
\end{table}

\subsection{Runtime and Cost Analysis}
We report training cost from three aspects: per-epoch training time, GPU memory usage, and inference time.
The complete results across all five benchmarks from~\cite{dwivedi2022benchmarkinggraphneuralnetworks} are summarized in Table~\ref{tab:efficiency}.
Compared to GPS-PCL, GCCM introduces only negligible overhead, with similar training time and GPU memory usage.
The additional cost mainly comes from feature perturbation and the contrastive consistency objective.
For inference, compared to GPS-Diffusion which requires iterative denoising, GCCM is substantially faster due to one-step inference, achieving inference speed comparable to the deterministic GPS baseline while retaining the generative formulation.

\section{Discussion of Broader Impacts \& Limitations}
\label{sec:limitations}
GCCM has demonstrated superior performance on general graph prediction tasks. 
Moreover, the proposed framework is applicable to a broader applications of prediction tasks on graphs, such as combinatorial optimization~\cite{sun2023difuscographbaseddiffusionsolvers,li2025fastt2toptimizationconsistency} and graph edit distance~\cite{huang2025diffgedcomputinggraphedit,huang2025unsupervisedtrainingmatchingbasedgraph}. 
In these settings, directly applying consistency training to accelerate diffusion-based models may similarly converge to the shortcut solution and collapse into a deterministic predictor, leading to unsatisfactory performance. 
In such cases, GCCM remains applicable for mitigating the shortcut solution, leading to improved performance. 
However, due to limited GPU computational resources, we are not able to evaluate our framework on broader applications, and we will leave this to future works.



\end{document}